\let\hat\widehat
\let\tilde\widetilde
\newtheorem{theorem}{Theorem}
\newtheorem{lemma}[theorem]{Lemma}
\newenvironment{proof}{{\bf Proof.}}{$\Box$}
\renewcommand{\P}{\mbox{$\mathbb{P}$}}
\newcommand{\E}{\mbox{$\mathbb{E}$}}
\begin{document}

\twocolumn[

\aistatstitle{Distribution-Free Distribution Regression}

\aistatsauthor{ Barnab\'as P\'oczos \And Alessandro Rinaldo \And Aarti Singh \And Larry Wasserman }

\aistatsaddress{
Carnegie Mellon University\\
Pittsburgh, PA\\
USA, 15213
} ]

\begin{abstract}

`Distribution regression' refers
to the situation where a response $Y$ depends on a covariate $P$
where $P$ is a probability distribution.
The model is $Y=f(P) + \mu$ where $f$ is an unknown regression function and
$\mu$ is a random error.
Typically, we do not observe $P$ directly, but rather, we observe
a sample from $P$.
In this paper we develop theory and methods for
distribution-free versions of distribution regression.
This means that we do not make distributional assumptions
about the error term $\mu$ and covariate $P$.
We prove that when the effective dimension is small enough
(as measured by the doubling dimension), then the excess prediction risk converges to zero with a polynomial rate.

\end{abstract}

\section{Introduction}

In a standard regression model,
we need to predict a real-valued response $Y$
from a vector-valued covariate (or feature)
$X\in\mathbb{R}^d$.
Recently, there has been interest
in extensions of standard regression from finite dimensional Euclidean spaces to other domains.
For example, in functional regression
(\cite{Ferraty2006}) the covariate is a function instead
of a finite dimensional vector.

In this paper, we study
{\em distribution regression}
where the covariate is a probability distribution $P$.
This differs from functional regression in two important ways.
First, $P$ is a probability measure on $\mathbb{R}^k$
rather than a one-dimensional function.
Second, and more importantly, we do not observe the covariate $P$ directly.
Rather, we observe a sample from $P$, which means
that we have a
regression model with measurement error
(\cite{carroll2006measurement}, \cite{fan1993nonparametric}).


\begin{figure}
\begin{center}
\includegraphics[scale=0.3]{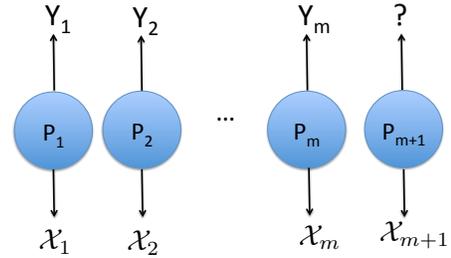}
\end{center}
\caption{Illustration of the model - distributions $P_1, \dots, P_m, P_{m+1}$ are
unobserved, only the ${\cal X}_1,\ldots,{\cal X}_m, {\cal X}_{m+1}$ sample sets are observable.}
\label{fig:DFDR}
\end{figure}

The formal definition of the problem is as follows.
We consider a regression problem with variables
$(P_1,Y_1), \ldots, (P_m,Y_m)$
where
$Y_i \in \mathbb{R}$ and each $P_i$ is a probability distribution
on a compact subset ${\cal K}\subset \mathbb{R}^k$.
We assume that
$$
Y_i = f(P_i) + \mu_i, \quad i =1,\ldots,m,
$$
for some functional $f$, where $\mu_i$
is a noise variable with mean 0.
We do not observe $P_i$ directly;
rather we observe a sample
\begin{equation}
X_{i1},\ldots, X_{i n_i} \stackrel{i.i.d}{\sim} P_i.
\end{equation}
Thus the observed data are
\begin{equation}
({\cal X}_1,Y_1), \ldots, ({\cal X}_m,Y_m)
\end{equation}
where
${\cal X}_i = \{ X_{i1},\ldots, X_{i n_i}\}$.
Our goal is to predict a new $Y_{m+1}$
from a new batch ${\cal X}_{m+1}$ drawn from a new distribution $P_{m+1}$. This model is illustrated in Figure~\ref{fig:DFDR}.

We model the unobservable probability distributions $P_1,\ldots,P_m$ as follows.
Let $\mathbb{D}$ denote the set of all distributions on ${\cal K}$
that have a density with respect to the Lebesgue measure.
We assume that the distributions $P_i$ are an i.i.d. sample from
a measure ${\cal P}$ on $\mathbb{D}$, that is,
$$
P_1,\ldots, P_m, P_{m+1} \stackrel{i.i.d}{\sim} {\cal P}.
$$
Note that $f:\mathbb{D}\to \mathbb{R}$.
If $Q(\cdot|P)$ denotes the law of $Y$ given $P$, then
the joint distribution of $(Y,P)$ is given by
\begin{align*}
\mathbb{P}(Y\in A, P\in B) &= Q(Y\in A| P\in B)\mathcal{P}(P\in B)
\end{align*}


\emph{Our main result} is a theorem where we prove that when the effective dimension measured by the doubling dimension is small enough,
then the estimator is consistent and the prediction risk converges to zero with a polynomial rate.

Our results are \emph{distribution free} in the sense that
the only distributional assumptions we make in this regression problem are that
$\mu_i$ has mean 0 and that
$\mathbb{P}(|Y_i| \leq B_Y)=1$ for some $B_Y$.
We make no other distributional assumptions.

{\textbf{Outline}.}
In Section \ref{section::related}
we discuss related work.
We propose a specific estimator for distribution regression in  Section \ref{section::kernel}. We call
this {\em kernel-kernel estimator} since it makes use of
kernels in two different ways.
In Section \ref{section::upper} we derive an upper bound
on the risk of the estimator. The proofs can be found in Section \ref{section::proofs}.
In Section \ref{section::doubling}
we analyze the risk bound in terms of
the doubling dimension, which is a measure
of the intrinsic dimension of the space.
We present numerical illustrations in Section \ref{section::numerical}.
Finally, we give some concluding remarks in
Section \ref{section::discussion}.

\section{Related work}
\label{section::related}

Our framework is related to functional data analysis, which is a new and steadily improving field of statistics. For comprehensive reviews and references, see \citet{Ramsay05FDA,Ferraty2006}.

A popular approach to do machine learning, such as classification and regression, on the domain of distributions is to embed the distribution to a Hilbert space, introduce kernels between the distributions, and then use a traditional kernel machine to solve the learning problem. There are both parametric and nonparametric methods proposed in the literature.

Parametric methods, (e.g. \citet{Jebara04probabilityproduct,Moreno04akullback-leibler,Jaakkola98exploitinggenerative}), usually fit a parametric family (e.g.\ Gaussians distributions or exponential family) to the densities, and using the fitted parameters they estimate the inner products between the distributions. The problem with parametric approaches, however, is that when the true densities do not belong to the assumed parametric families, then this method introduces some unavoidable bias during the estimation of the inner products between the densities.

A couple of nonparametric approaches exist as well. Since our covariates are represented by finite sets, reproducing kernel Hilbert space (RKHS) based set kernels can be used in these learning problems. \citet{Smola07ahilbert} proposed to embed the distributions to an RKHS using the mean map kernels.
In this framework, the role of universal kernels have been studied
by \citet{christmann:universal}. Recently, the representer
theorem has also been generalized  for the space of probability distributions
\citep{muandet:measure-machines}. \cite{Kondor03akernel} introduced Bhattacharyya's measure of affinity between finite-dimensional Gaussians in a Hilbert space.
In contrast to the previous approaches, \citet{sdm-cvpr,poczos11uai} used nonparametric R\'enyi divergence estimators to solve machine learning problems on the set of distributions.

Although, there are a few algorithms designed for regression on distributions, we know very little about their theoretical properties.
To the best of our knowledge, even the simplest, fundamental questions have not been studied yet. For example, we do not know how many distributions ($m$) and how many samples ($n_i$, $i=1,\ldots,m$) we need to achieve small prediction error.
Our paper is providing an answer to this question.

\section{The Kernel-Kernel Estimator}
\label{section::kernel}

In this section we define an estimator $\hat f$ for the unknown
function $f$.  Our predictor for $Y_{m+1}$ is then $\hat Y_{m+1} =
\hat f({\cal X}_{m+1})$.  Let $\hat P_i$ denote an estimator of $P_i$
based on ${\cal X}_i$, and let  ${\cal X}$ be a sample from a new
distribution $P=P_{m+1}$. Accordingly, we denote
with $\hat P$ an estimator of $P$ based on ${\cal X}$.

Given a bandwidth $h > 0$ and a kernel function $K$ (whose properties will be specified later), we define
\begin{align*}
\hat f (\hat P) &=\hat f (\hat P; \hat P_1,\ldots,\hat P_m)\\
&=\begin{cases}\frac{\sum_i Y_i K\left( \frac{D(\hat P_i,\hat P)}{h}\right)}{\sum_i K\left( \frac{D(\hat P_i,\hat P)}{h}\right)}
&\mbox{if } \sum_i K\left( \frac{D(\hat P_i,\hat P)}{h}\right)>0\\
0 &\mbox{otherwise.}
\end{cases}
\end{align*}

To complete the definition, we need to specify
$\hat P_i$, $\hat P$ and $D$.
We will estimate $P_i$ --- or, more precisely, the density $p_i$ of $P_i$ ---
with a kernel density estimator
\begin{equation}
\hat p_i (x) = \frac{1}{n_i}\sum_{j=1}^{n_i}\frac{1}{b_i^k}
B\left(\frac{\|x-X_{ij}\|}{b_i}\right)
\end{equation}
where $B$ is an appropriate kernel function (see, e.g. \cite{tsybakov.2010}) with bandwidth $b_i > 0$. Here $\|x\|$ denotes the Euclidean norm of $x\in\mathbb{R}^k$.
Accordingly, $\hat P_i$ is defined by
$$
\hat P_i(A) = \int_A \hat p_i(u) du,
$$
for all Borel measurable subsets of $\mathbb{R}^k$.
For any two probabilities in $P$ and $Q$ in $\mathbb{D}$, we take
$D(P,Q) $ to be the $L_1$ distance of their densities: $D(P,Q)= \|p-q\|  = \int |p(x)-q(x)| dx$.
Hence,
\begin{equation}
\hat f(\hat P) = \hat f(\hat P;\hat P_1,\ldots, \hat P_m )=
\frac{\sum_{i=1}^m Y_i K\left( \frac{ ||\hat p - \hat p_i||}{h}\right)}
{\sum_{i=1}^m  K\left( \frac{ ||\hat p - \hat p_i||}{h}\right)}
\end{equation}
which we call the `kernel-kernel estimator' since it makes
use of two kernels, $B$ and $K$.

For simplicity, $n$ will denote the size of the sample $\mathcal{X}$,
and $b$ will be the bandwidth in the estimator of $\hat p$.

In what follows we will make the following assumptions on $f$, $K$, $\mathcal{P}$, $\mu_i$, and $Y_i$.

{\bf Assumptions}

\begin{itemize}
\item {\it (A1) H\"{o}lder continuous functional}.
The unknown functional $f$ belongs to the class ${\cal M}={\cal M}(L,\beta,D)$
of H\"{o}lder continuous functionals on $\mathbb{D}$:
\begin{align*}
{\cal M}  = \Biggl\{f:\  |f(P_i) - f(P_j)| \leq L \, D(P_i,P_j)^{\beta} \Biggr\},
\end{align*}
for some $L>0$ and $0<\beta\leq 1$, where  $D$ is  the above specified $L_1$ metric on $\mathbb{D}$.
In the $\beta=1$ special case this means that $f$ is Lipschitz continuous.

\item {\it (A2) Asymmetric boxed and Lipschitz kernel}. The kernel $K$
satisfies the following properties: $K:[0,\infty]\to \mathbb{R}$ is non-negative and
Lipschitz continuous with Lipschitz constant $L_K$. In addition, there exist
constants $0<\underline{K}<1$ and $0<r<R<\infty$ such that, for all
$x >0$, it holds that
$$
\underline{K}I_{\{x\in \mathcal{B}(0,r)\}}\leq K(x)\leq I_{\{x\in \mathcal{B}(0,R)\}}.
$$

\item {\it (A3) H\"{o}lder class of distributions}.
The distribution ${\cal P}$ is supported
on the set of distributions ${\cal H}_k(1)$
with densities that are $1$-smooth H\"{o}lder functions,
as defined in \cite{rigollet2009optimal}.

\item {\it (A4) Bounded regression}.
We will assume that
${\sup}_{P \in \mathcal{P} }|f(P)| < f_{\max}$
for some $f_{\max} > 0$.
Also, $\mu_i$ has mean $0$ and $\mathbb{P}(|Y_i| \leq B_Y) = 1$ for some $B_Y<\infty$.

\item {\it (A5) Lower bound on $\min_{1\leq i\leq m+1}{n_i}$.}
Let $n=\min_{1\leq i\leq m+1}{n_i}$. We assume that
$e^{n^{\frac{k}{2+k}}}/m \to \infty$
as $m\to\infty$.

\item {\it (A6) Relationship between $n$ and $h$.}
Assume that
$C_* n^{-\frac{1}{2+k}} \leq rh/4$
where $C_*$ is defined in (\ref{eq::star}).

\end{itemize}

\section{Upper Bound on Risk}
\label{section::upper}

We are concerned with upper bounding the risk
$$
R(m,n)=\E \bigg[ |\hat f(\hat P;\hat{P}_1,\ldots,\hat{P}_m )- f(P)| \bigg],
$$
where the expectation is with respect to the joint distribution of
the sample $({\cal X}_1,Y_1),\ldots,({\cal X}_{m},Y_{m})$,
the new covariate $P = P_{m+1}$ and the new observation ${\cal X}_{m+1} $.
Note that the absolute prediction risk is
$\mathbb{E}|\hat Y - Y| \leq R(m,n) + c$,
where $c=\mathbb{E}(|\mu|)$ is a constant.
So bounding the prediction risk is equivalent to bounding
$R(m,n)$, which we call the excess prediction risk.
In what follows,
$C,c_1,c_2,\ldots$
represent constants whose value can be different
in different expressions.

Let $\mathcal{B}(P,h)=\{\tilde P \in \mathbb{D} : D(\tilde P,P)\leq h\}$
denote the $L_1$ ball of distributions around $P$ with radius $h$.
We will see that the risk depends on
the size of the class of probabilities $\mathbb{D}$.
In particular,
the risk depends on
the {\em small ball probability}
$$
\Phi_P(h) = \mathcal{P} (\mathcal{B}(P,h)),
$$
where $P$ is a fixed distribution and $\Phi_P(h)$ is a function of $P$.


Our first result, Theorem~\ref{thm:upper.bound}, provides a general upper bound on the risk. In our second result (Section~\ref{section::doubling}) we show that when the effective dimension measured by the doubling dimension is small, then the risk converges to zero. We also derive an upper bound on the rate of convergence.

\begin{theorem}\label{thm:upper.bound}
Suppose that the assumptions stated above hold.
Let $b=n^{-\frac{1}{2+k}}$ be the bandwidth in the density estimators $\hat p_i$.
Then
\begin{align*}
R&(m,n)\leq
\frac{1}{h}\E\left[\frac1{\Phi_P(rh/2)}\right]
 C_1n^{-\frac{1}{2+k}} +
C_2 h^\beta\\
&+ C_3 \sqrt{\frac{1}{m}} \sqrt{\E\left[\frac1{\Phi_P(rh/2)}\right]}
+ {\frac{C_4}{m}} {\E\left[\frac1{\Phi_P(rh/2)}\right]}
\\
&+(m+1) e^{- \frac{1}{2}n^{\frac{k}{2+k}}},
\end{align*}
where the constants $C_i$'s are specified in the proof.
\end{theorem}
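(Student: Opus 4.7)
The plan is to introduce an \emph{oracle} Nadaraya--Watson estimator
$$
\tilde f(P) := \frac{\sum_i Y_i K(D(P_i,P)/h)}{\sum_i K(D(P_i,P)/h)}
$$
that uses the true (unobserved) distributions in the kernel weights, and decompose $R(m,n) \le \E|\hat f(\hat P) - \tilde f(P)| + \E|\tilde f(P) - f(P)|$. The first piece quantifies the cost of plugging in $\hat P_i$ instead of $P_i$; the second is a standard local-kernel regression error that I analyze conditionally on $P_1,\ldots,P_m,P$.

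First I would quarantine the density-estimation failure into a rare event. Let $\epsilon_n := C_* n^{-1/(2+k)}$, which A6 forces to satisfy $\epsilon_n \le rh/4$, and let $\mathcal{E} = \bigcap_{i=1}^{m+1}\{\|\hat p_i - p_i\|_1 \le \epsilon_n\}$. Under A3, the Devroye-type $L_1$ concentration for kernel density estimators of $1$-H\"older densities yields $\P(\|\hat p_i-p_i\|_1 > \epsilon_n) \le \exp(-n^{k/(2+k)}/2)$, hence $\P(\mathcal{E}^c) \le (m+1)\exp(-n^{k/(2+k)}/2)$; on $\mathcal{E}^c$ I bound the absolute error trivially by $2B_Y$, producing the last term of the theorem. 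On $\mathcal{E}$ the $L_1$ triangle inequality gives $|D(\hat P_i,\hat P)-D(P_i,P)| \le 2\epsilon_n \le rh/2$, a deterministic slack that I exploit throughout.

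For the oracle error $\tilde f(P)-f(P)$, condition on $(P_i)_{i=1}^m$ and $P$ and write $\tilde f(P) - f(P) = \sum_i \tilde W_i(f(P_i)-f(P)) + \sum_i \tilde W_i\mu_i$ with $\tilde W_i := K_i/S$, $K_i := K(D(P_i,P)/h)$, and $S := \sum_i K_i$. The support condition in A2 forces $\tilde W_i>0 \Rightarrow D(P_i,P)\le Rh$, so A1 yields the deterministic bias bound $L(Rh)^\beta = C_2 h^\beta$. For the noise term, the $\mu_i$'s are conditionally independent, mean zero, and bounded by $B := B_Y+f_{\max}$; Hoeffding supplies the conditional second-moment bound $B^2\sum_i \tilde W_i^2 \le B^2/(\underline K N)$, where $N := \sum_{i=1}^m I\{D(P_i,P)\le rh/2\}$ is binomial with mean $m\Phi_P(rh/2)$. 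Jensen's inequality with binomial moment bounds of the form $\E[(N\vee 1)^{-1}] \lesssim (m\Phi_P(rh/2))^{-1}$, together with the small-$N$ tail $\P(N=0)\le e^{-m\Phi_P(rh/2)}$, then produce the two terms $C_3 m^{-1/2}\sqrt{\E[1/\Phi_P(rh/2)]}$ and $C_4 m^{-1}\E[1/\Phi_P(rh/2)]$.

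The main obstacle is the approximation error $|\hat f(\hat P)-\tilde f(P)|$. Writing $K_i^* := K(D(\hat P_i,\hat P)/h)$ and $S^* := \sum_i K_i^*$, the Lipschitz property of $K$ from A2 gives $|K_i^*-K_i| \le 2L_K\epsilon_n/h$ on $\mathcal{E}$. Expanding the difference of the two Nadaraya--Watson ratios,
$$
|\hat f(\hat P)-\tilde f(P)| \le \frac{B_Y\sum_i |K_i^*-K_i|}{S^*} + |\tilde f(P)|\frac{|S^*-S|}{S^*}.
$$
By A6, $\epsilon_n\le rh/4$ guarantees that every $i$ with $D(P_i,P)\le rh/2$ satisfies $D(\hat P_i,\hat P)\le rh$, so the box lower bound $K\ge \underline K$ of A2 forces $S^* \ge \underline K N$. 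Bounding the number of nonzero summands crudely by $m$ yields $|\hat f(\hat P)-\tilde f(P)|\le C m\epsilon_n/(h\,N)$, and $\E[m/(N\vee 1)] \lesssim \E[1/\Phi_P(rh/2)]$ delivers the first term $C_1 h^{-1}\E[1/\Phi_P(rh/2)]\,n^{-1/(2+k)}$. The delicate point is the coupling between the random denominator $S^*$ and the random perturbations $K_i^*-K_i$: assumption A6 is precisely what disentangles them by guaranteeing a deterministic margin between the ``inner radius'' $rh/2$ used to lower-bound $S^*$ and the radius at which the KDE errors operate.
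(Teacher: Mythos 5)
Your proposal is correct and takes essentially the same route as the paper's own proof: the identical decomposition through the oracle estimator $\hat f(P;P_1,\ldots,P_m)$, the same McDiarmid/KDE concentration event giving the $(m+1)e^{-\frac12 n^{k/(2+k)}}$ term with the $rh/4$ margin from (A6), the same H\"older-bias, conditional-variance and small-ball-probability treatment of the oracle term, and the same Lipschitz-kernel perturbation with binomial-count denominators for the plug-in term. The differences are purely organizational---you work on the high-probability event with the deterministic bound $|K_i^*-K_i|\le 2L_K\epsilon_n/h$ and a single count $N$, whereas the paper runs a case analysis over the events that $\sum_i K_i$ and $\sum_i \hat K_i$ are zero, small, or at least $\underline{K}$, and bounds $\mathbb{E}\bigl[\sum_i|\epsilon_i|\,\big|\,P,\{P_i\}\bigr]$ in expectation---so the only point to make explicit is the $N=0$ case of the plug-in term (trivial bound $2B_Y$ plus $\mathbb{P}(N=0\mid P)\le e^{-m\Phi_P(rh/2)}$), which the tools you already invoke handle immediately.
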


\section{Proof of Theorem~\ref{thm:upper.bound}}\label{section::proofs}

In this Section we prove our main result, Theorem~\ref{thm:upper.bound}. The main idea of the proof is to use the triangle inequality to write
{\small
\begin{align}
R(m,n)& = \E|\hat f(\hat P; \hat P_1,\ldots, \hat P_m )- f(P)| \nonumber\\
&\leq \E|\hat f(\hat P; \hat P_1,\ldots, \hat P_m )-\hat f( P; P_1,\ldots, P_m )| \label{eq:term1}\\
& \qquad+\E|\hat f( P; P_1,\ldots, P_m )- f(P)|. \label{eq:term2}
\end{align}
}
In Sections~\ref{sec:term1} and~\ref{sec:term2} we will derive upper bounds for \eqref{eq:term1} and \eqref{eq:term2}, respectively. Section  \ref{sec:technical}  contains a series of technical results needed in our proofs.


Throughout, we let $\hat K_i=K\left( \frac{D(\hat P_i,\hat P)}{h}\right)$, $K_i=K\left( \frac{D( P_i, P)}{h}\right)$ and $\epsilon_i = K_i - \hat{K}_i$, for $i=1,\ldots,m$. Note that, for ease of readability, we have omitted the dependence on $h$.

\subsection{Technical Results}\label{sec:technical}



\subsubsection{$L_1$ Risk of Density Estimators}\label{sec:MISEdensity}

In this section we bound $\E[D(P,\hat P)|P] = \E[\int | p-\hat p| |P]$,
the $L_1$ risk  of the density estimator $\hat p$ of $p$, uniformly over all $P$ in $\mathbb{D}$.
To this end, suppose that
$n_i \geq n$ for all $i=1,2,\ldots,m+1$, and
let $b_i = b = n^{-\frac{1}{k+2}}$. In this case, the following lemma provides upper bound on the $L_1$ risk of the density estimator.
\begin{lemma}
\begin{align}
\E[D(\hat P_i,P_i)|P_i]&\leq \bar C n^{-\frac{1}{2+k}}, \label{eq:exp_D_hatP_P}\\
\E[D(\hat P_i,P_i)]&\leq \bar C n^{-\frac{1}{2+k}}, \nonumber
\end{align}
where
\begin{equation}\label{eq:hatc}
\bar C =c_0(c_1+c_2),
\end{equation}
with $c_0$, $c_1$ and $c_1$ constants specified in the proof.
\end{lemma}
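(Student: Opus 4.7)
The plan is the standard bias--variance decomposition of the $L_1$ risk for a kernel density estimator on a compact support, combined with a balancing of the two terms by the choice $b = n^{-1/(k+2)}$. By the triangle inequality in $L_1$,
\[
\E[\|\hat p_i - p_i\|_1 \mid P_i] \;\leq\; \|\E[\hat p_i \mid P_i] - p_i\|_1 \;+\; \E[\|\hat p_i - \E[\hat p_i \mid P_i]\|_1 \mid P_i],
\]
so it suffices to bound the bias term and the stochastic (centered) term separately, then optimize the bandwidth.

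For the bias, I would use Assumption (A3): the density $p_i$ lies in the $1$-smooth Hölder class $\mathcal{H}_k(1)$ of \cite{rigollet2009optimal}. For a standard kernel $B$ of order consistent with $1$-smoothness, a change of variables $u = (x - X_{i1})/b_i$ yields the pointwise bound $|\E[\hat p_i(x)\mid P_i] - p_i(x)| \leq c_1' b_i$ uniformly in $x$ in the interior of $\mathcal{K}$. Integrating over the compact support $\mathcal{K}$ gives $\|\E[\hat p_i\mid P_i] - p_i\|_1 \leq c_1 b_i$, with $c_1$ depending on the Hölder constant, $\mathrm{vol}(\mathcal{K})$, and boundary/moment factors of $B$.

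For the stochastic term, compactness of the support allows me to apply Cauchy--Schwarz: writing $V(x) = \mathrm{Var}(\hat p_i(x)\mid P_i)$,
\[
\E[\|\hat p_i - \E[\hat p_i\mid P_i]\|_1\mid P_i] \;\leq\; \sqrt{\mathrm{vol}(\mathcal{K})}\, \sqrt{\int V(x)\,dx}.
\]
A direct calculation for a kernel density estimator with i.i.d.\ $X_{ij}$ yields $\int V(x)\,dx \leq \frac{\|B\|_2^2}{n_i b_i^k}$ (the $p_i(x)$ factor is absorbed after integrating and using $\int p_i = 1$). Hence the centered term is at most $c_2 (n_i b_i^k)^{-1/2}$.

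Combining bias and variance under $n_i \geq n$ and $b_i = b = n^{-1/(k+2)}$, both terms become $O(n^{-1/(k+2)})$, yielding
\[
\E[D(\hat P_i,P_i)\mid P_i] \;\leq\; c_1 b + c_2 (nb^k)^{-1/2} \;=\; (c_1 + c_2)\,n^{-\frac{1}{2+k}},
\]
and setting $\bar C = c_0(c_1 + c_2)$ absorbs whatever multiplicative constant $c_0$ is needed for boundary handling or to pass from pointwise to integrated bounds. The unconditional statement follows by taking expectation over $P_i$ and noting that all constants are uniform in $P_i \in \mathcal{H}_k(1)$.

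The only mildly delicate point is that the $L_1$ rate $n^{-1/(k+2)}$ for a $1$-Hölder density matches the pointwise rate because compactness of $\mathcal{K}$ lets us bypass any tail issue; the main obstacle is keeping the bias bound uniform over $p_i \in \mathcal{H}_k(1)$ including near the boundary of $\mathcal{K}$, but this is subsumed in the constant $c_0$ and in the conventions of Assumption (A3).
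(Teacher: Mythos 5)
Your proof is correct and arrives at the same bound, but it is organized differently from the paper's. The paper applies Cauchy--Schwarz once, globally, on the compact support to pass from $L_1$ to $L_2$, i.e. $\int |p_i - \hat p_i| \leq c_0 \bigl(\int (p_i - \hat p_i)^2\bigr)^{1/2}$, and then invokes Lemma 4.1 of \cite{rigollet2009optimal} as a black box for the $L_2$ MISE bound $c_1^2 b_i^2 + c_2^2/(n_i b_i^k)$, finishing with Jensen's inequality and $\sqrt{a+b} \leq \sqrt{a}+\sqrt{b}$. You instead decompose in $L_1$ first (triangle inequality into bias and stochastic terms), bound the bias directly in $L_1$ via the pointwise H\"{o}lder computation, and apply Cauchy--Schwarz only to the stochastic term; in effect you reprove the content of the cited lemma from scratch. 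Both routes balance the same two terms, $O(b)$ and $O((n b^k)^{-1/2})$, at $b = n^{-1/(k+2)}$, and the constants play identical roles. What your version buys is self-containedness and no need for the Jensen step; what the paper's buys is brevity and the delegation of boundary effects to the cited lemma. On that last point, your remark that the boundary is ``subsumed in the constant'' deserves one more sentence to be airtight: near $\partial \mathcal{K}$ the pointwise bias can be $O(1)$ rather than $O(b_i)$ (the kernel mass spills outside the support), but that region has Lebesgue measure $O(b_i)$, so its contribution to the integrated bias is still $O(b_i)$ --- this is precisely why the $L_1$ bound survives even though the uniform pointwise bound fails at the boundary.
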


\begin{proof}
Recall that we assume that
${\cal P}$ is supported
on the set ${\cal H}_k(1)$
of distributions, which are $1$-smooth
$k$-dimensional densities
as defined in
\cite{rigollet2009optimal}.

Let $\E \bigg[ D_2(\hat{P}_i,P)|P_i \bigg] = \E \bigg[ \sqrt{\int (\hat p_i - p_i)^2} \bigg]$ denote the integrated mean squared risk for the density estimator $\hat p_i$ of a fixed density $p_i$.
It then follows from Lemma 4.1 of
\cite{rigollet2009optimal} that
(with an appropriate kernel function $B$),
\begin{align*}
\E[D^2_2(\hat P_i,P_i)|P_i]& \leq
c_1^2 b_i^{2  } + \left(\frac{c_2^2}{n_i b_i^k}\right)
\end{align*}
for some constants $c_1,c_2>0$.

From Jensen's inequality, we have that
$E[X]\leq (\E[X^2])^{1/2}$ for any $X$ random variable. We also know that $(a+b)^{1/2}\leq a^{1/2}+b^{1/2}$ for any $a,b>0$, therefore
\begin{align}
\E[D_2(\hat P_i,P_i)|P_i]&\leq
\left(c_1^2 b_i^2 + \left(\frac{c_2^2}{n_i b_i^k}\right)\right)^{1/2} \nonumber\\
&\leq c_1b_i + \frac{c_2}{n_i^{1/2} b_i^{k/2}}. \nonumber
\end{align}

Since the distributions in $\mathbb{D}$ are supported on a compact set and the kernel $B$ has also compact support, we have, for an appropriate constant $c_0 > 0$,
\[
\int |p_i - \hat p_i | \leq c_0 \sqrt{\int (p_i -\hat p_i)^2}.
\]

Therefore,
\begin{align*}
\E[D(\hat P_i,P_i)|P_i]&\leq c_0 \E[D_2(\hat P_i,P_i)|P_i]\\
&\leq c_0 (c_1b_i + \frac{c_2}{n_i^{1/2} b_i^{k/2}})\\
&\leq c_0(c_1+c_2) n^{-\frac{1}{2+k}},
\end{align*}
where the last step follows from our assumptions that $n_i^{-1/2} b_i^{-k/2} \leq n^{-\frac{1}{2}}n^{\frac{k}{2(k+2)}}=n^{-\frac{1}{k+2}}$, and thus
$$
c_1b_i + \frac{c_2}{n_i^{1/2} b_i^{k/2}} \leq (c_1+c_2) n^{-\frac{1}{2+k}}.
$$
\end{proof}

Next, we show that the terms $D(\hat P_i, P_i)$ are uniformly bounded by a term of order $O(h)$, with high probability.

\begin{lemma}\label{lem:Omega}
With probability no smaller than $1- (m+1) e^{- \frac{1}{2}n^{\frac{k}{2+k}} }$, $D(\hat P_i, P_i) < \frac{rh}{4}$ for all $i =1,\ldots,m+1$.
\end{lemma}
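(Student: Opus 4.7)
The plan is to apply McDiarmid's bounded differences inequality to the random variable $g_i := D(\hat P_i, P_i) = \int |\hat p_i - p_i|$, viewed as a function of the i.i.d. coordinates $X_{i1},\ldots,X_{in_i}$ (conditional on $P_i$), and then combine the resulting concentration bound with the mean bound $\E[D(\hat P_i,P_i)\mid P_i] \leq \bar C n^{-1/(2+k)}$ from the previous lemma. A union bound over $i=1,\ldots,m+1$ finishes the job.

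First, I would verify the bounded differences property. If one replaces $X_{ij}$ by an arbitrary $X'_{ij}$, the kernel density estimator changes by at most $\|\hat p_i - \hat p'_i\|_1 \leq \frac{2}{n_i b_i^k}\int B(\|x\|/b_i)\,dx = \frac{2 c_B}{n_i}$, where $c_B = \int B(\|u\|)\,du$ is a constant depending only on $B$. By the triangle inequality the same quantity bounds the change in $g_i$. McDiarmid then yields
\begin{equation*}
\P\!\left( g_i - \E[g_i\mid P_i] \geq t \,\Big|\, P_i\right) \leq \exp\!\left(-\frac{n_i t^2}{2 c_B^2}\right).
\end{equation*}

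Next, I would choose $t$ so that the right-hand side is at most $\exp(-\tfrac12 n^{k/(2+k)})$. Since $n_i \geq n$, the choice $t = c_B n^{-1/(2+k)}$ makes $\frac{n_i t^2}{2 c_B^2} \geq \frac{1}{2} n^{k/(2+k)}$. Combining with the mean bound gives
\begin{equation*}
g_i \;\leq\; \bar C n^{-1/(2+k)} + c_B n^{-1/(2+k)} \;=\; (\bar C + c_B)\, n^{-1/(2+k)}
\end{equation*}
with the desired probability. Taking $C_* := \bar C + c_B$ (this should be the constant referenced in (\ref{eq::star})), assumption (A6) ensures $(\bar C + c_B) n^{-1/(2+k)} \leq rh/4$, and therefore $\P(g_i \geq rh/4 \mid P_i) \leq \exp(-\tfrac12 n^{k/(2+k)})$. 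Taking expectation over $P_i$ removes the conditioning, and a union bound over the $m+1$ indices yields the claim.

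The only non-routine point is being honest about the constant $C_*$: the argument works provided $C_*$ in (A6) is chosen at least as large as $\bar C + c_B$, which should match the paper's definition at (\ref{eq::star}). Everything else is a direct application of McDiarmid plus a union bound, and the $L_1$ boundedness of the kernel $B$ (implied by its compact support and smoothness) is what ensures the bounded differences constant is $O(1/n_i)$ rather than blowing up with $b_i$.
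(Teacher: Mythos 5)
Your proof is correct and takes essentially the same route as the paper: McDiarmid's inequality applied to the $L_1$ error of the kernel density estimator (the paper simply cites Section 2.4 of Devroye--Lugosi for this concentration step, where the kernel is normalized so that your $c_B$ equals $1$), combined with the mean bound $\bar C n^{-1/(2+k)}$, assumption (A6), and a union bound over the $m+1$ distributions. Your constant $C_* = \bar C + c_B$ matches the paper's definition $C_* = 1 + c_0(c_1+c_2) = 1 + \bar C$ in (\ref{eq::star}), since $B$ integrates to one.
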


Notice  that by Assumption (A5), $1- (m+1) e^{- \frac{1}{2}n^{\frac{k}{2+k}} } \rightarrow 1$.

\begin{proof}
 From McDiarmid's inequality, for any $\epsilon>0$ we have that
$$
\mathbb{P}( ||\hat p_i - p_i||_1 -  \mathbb{E}||\hat p_i - p_i||_1> \epsilon) \leq  e^{-n\epsilon^2/2}
$$
(see, for example,
section 2.4 of \cite{devroye2001combinatorial}).
Thus,
\begin{align*}
\mathbb{P}( ||\hat p_i - p_i||_1 >\mathbb{E}||\hat p_i - p_i||_1+n^{-\frac{1}{2+k}})& \leq e^{-\frac{1}{2} n^{\frac{k}{2+k}} },
\end{align*}
since $nn^{-\frac{2}{2+k}}=n^{\frac{k}{2+k}}$. This implies that
\begin{align*}
\mathbb{P}( \max_{1\leq i \leq m+1} ||\hat p_i - p_i||_1 >\mathbb{E}||\hat p_i - p_i||_1+n^{-\frac{1}{2+k}}))\\
\leq (m+1) e^{- \frac{1}{2}n^{\frac{k}{2+k}} }\to 0,
\end{align*}
by assumption (A5).
Therefore,
\begin{align*}
1&-(m+1) e^{- \frac{1}{2}n^{\frac{k}{2+k}} } \\
&\leq \mathbb{P}( \max_{1\leq i \leq m+1} ||\hat p_i - p_i||_1 \leq \mathbb{E}||\hat p_i - p_i||_1+n^{-\frac{1}{2+k}}))\\
&\leq \mathbb{P}( \max_{1\leq i \leq m+1} ||\hat p_i - p_i||_1 \leq (1+c_0(c_1+c_2))n^{-\frac{1}{2+k}}).
\end{align*}
This implies that with
\begin{align}\label{eq::star}
C_*=(1+c_0(c_1+c_2))
\end{align}
and using assumption (A6),
we have that
\begin{equation}\label{eq::bigsave}
D(\hat P_i,P_i) \leq C_* n^{-\frac{1}{k+2}}\leq \frac{rh}{4}\ \ \ {\rm for\ all\ }i
\end{equation}
on an event $\Omega_{m,n}$,
where
$\mathbb{P}(\Omega_{m,n}^c) \leq (m+1) e^{- \frac{1}{2}n^{\frac{k}{2+k}} }$. Here
$\Omega_{m,n}^c$ denotes the complement of $\Omega_{m,n}$.
\end{proof}

\subsubsection{Other Lemmata}

Throughout this section we will make use of the constant $\bar C$, defined in \eqref{eq:hatc}.
In what follows, we will need a few lemmas that we list below. Their proofs can be found in the supplementary material.

The following lemma provides an upper bound on
$\mathbb{P}(\sum_{i=1}^m K_i=0)$ with the help of small ball probabilities.
\begin{lemma}
\label{lem:smallball}
\begin{align*}
\mathbb{P}\Bigl(\sum_{i=1}^m K_i=0\Bigr) &\leq
\mathbb{P}\Bigl(\sum_{i=1}^m K_i<\underbar{K}\Bigr)
=\frac{1}{e m}\mathbb{E}\left[\frac{1}{\Phi_{P}(rh)}\right].
\end{align*}
\end{lemma}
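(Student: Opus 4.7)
The plan is to bound the probability that all the kernel weights vanish by reducing it to the probability that none of the covariate distributions $P_i$ lies in a small $L_1$ ball around the test point $P$, and then apply the elementary inequalities $(1-x)^m\le e^{-mx}$ and $ue^{-u}\le 1/e$.

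First, the left inequality $\mathbb{P}(\sum_{i=1}^m K_i=0)\le \mathbb{P}(\sum_{i=1}^m K_i<\underline{K})$ is trivial from $\underline{K}>0$ and $K\ge 0$. To bound $\mathbb{P}(\sum_{i=1}^m K_i<\underline{K})$, I will use Assumption (A2). Since $K_i\ge \underline{K}\,I_{\{D(P_i,P)/h\le r\}}\ge 0$ for every $i$, the event $\{\sum_i K_i<\underline{K}\}$ forces $K_j<\underline{K}$ for every $j$, which in turn forces $D(P_j,P)>rh$ for every $j$, i.e. no $P_j$ falls in $\mathcal{B}(P,rh)$. Hence
\begin{equation*}
\mathbb{P}\Bigl(\sum_{i=1}^m K_i<\underline{K}\Bigr)\le \mathbb{P}\bigl(P_j\notin\mathcal{B}(P,rh)\ \text{for all } j=1,\ldots,m\bigr).
\end{equation*}

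Next I condition on $P=P_{m+1}$ and exploit that $P_1,\ldots,P_m$ are i.i.d.\ from $\mathcal{P}$ and independent of $P$. By the definition of the small-ball probability,
\begin{equation*}
\mathbb{P}\bigl(P_j\notin\mathcal{B}(P,rh)\ \forall j\mid P\bigr)=\bigl(1-\Phi_P(rh)\bigr)^m\le e^{-m\Phi_P(rh)},
\end{equation*}
using $1-x\le e^{-x}$. Taking expectation over $P$ yields an upper bound of $\mathbb{E}[e^{-m\Phi_P(rh)}]$.

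Finally I use the scalar inequality $ue^{-u}\le e^{-1}$, valid for all $u\ge 0$ (the maximum of $ue^{-u}$ is attained at $u=1$), equivalently $e^{-u}\le 1/(eu)$. Setting $u=m\Phi_P(rh)>0$ gives
\begin{equation*}
\mathbb{E}\bigl[e^{-m\Phi_P(rh)}\bigr]\le \mathbb{E}\!\left[\frac{1}{e\,m\,\Phi_P(rh)}\right]=\frac{1}{e\,m}\,\mathbb{E}\!\left[\frac{1}{\Phi_P(rh)}\right],
\end{equation*}
which is the claimed bound (I interpret the ``$=$'' in the statement as the final ``$\le$'' in this chain). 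There is no real obstacle here; the only mildly non-routine ingredient is invoking the optimal constant $1/e$ from $\max_{u\ge 0}ue^{-u}=e^{-1}$ rather than a looser form such as $e^{-u}\le 1/u$.
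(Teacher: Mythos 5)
Your proof is correct and follows essentially the same route as the paper's: reduce $\{\sum_i K_i<\underline{K}\}$ to the event that no $P_i$ lands in $\mathcal{B}(P,rh)$ via the kernel lower bound in (A2), condition on $P$ and use independence to get $(1-\Phi_P(rh))^m$, then apply $(1-u)^m\le e^{-mu}$ and $\max_{u>0}ue^{-u}=e^{-1}$. Your reading of the ``$=$'' in the statement as a ``$\le$'' is also right --- the paper's own proof establishes exactly that inequality chain, so the equality sign in the lemma statement is a typo.
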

We will also need the following lemma.
\begin{lemma}\label{lemma:sum.over.K1}
\[
\E \left[\frac1{\sum_i K_i} I_{\{\sum_i K_i \geq\underline{K}\}}\right]
\leq \frac{1+1/\underline{K}}{m\underline{K}} \E \bigg[ \frac{1}{\Phi_P(Rh)} \bigg].
\]
\end{lemma}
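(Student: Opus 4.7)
The strategy is to condition on $P = P_{m+1}$ so that the quantities $K_1, \ldots, K_m$ become i.i.d.\ bounded random variables in $[0,1]$, and then exploit the kernel lower bound from assumption (A2) to dominate $\sum_i K_i$ from below by a binomial count. Specifically, (A2) gives the pointwise inequality $K_i \geq \underline{K}\,\mathbf 1_{\{D(P_i,P) \leq rh\}}$. Set $N = \sum_{i=1}^m \mathbf 1_{\{D(P_i,P) \leq rh\}}$; conditionally on $P$ this is Binomial$(m, \Phi_P(rh))$, and $\sum_i K_i \geq \underline{K}\, N$.

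Next I would split the indicator event according to whether $N \geq 1$ or $N = 0$. On $\{N \geq 1\}$ we have $\sum_i K_i \geq \underline{K} N > 0$, hence $1/\sum_i K_i \leq 1/(\underline{K} N)$; on $\{N = 0\}$ the indicator event $\{\sum_i K_i \geq \underline{K}\}$ still forces $\sum_i K_i \geq \underline{K}$, so $1/\sum_i K_i \leq 1/\underline{K}$. Hence
\begin{align*}
\frac{\mathbf 1_{\{\sum_i K_i \geq \underline{K}\}}}{\sum_i K_i}
\;\leq\; \frac{\mathbf 1_{\{N \geq 1\}}}{\underline{K}\,N} + \frac{\mathbf 1_{\{N=0\}}}{\underline{K}}.
\end{align*}

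Then I would bound each conditional expectation using two standard Binomial facts with $p = \Phi_P(rh)$: first, $\mathbf 1_{\{N \geq 1\}}/N \leq 2/(N+1)$ combined with the identity $\E[1/(N+1)\mid P] = (1 - (1-p)^{m+1})/((m+1)p)$ gives $\E[\mathbf 1_{\{N \geq 1\}}/N \mid P] \leq 2/((m+1)p)$; second, $(1-p)^m \leq e^{-mp} \leq 1/(emp)$ since $x e^{-x} \leq 1/e$. Summing the two conditional bounds and then taking the outer expectation over $P$ produces an upper bound of the form $\frac{c}{m\underline{K}}\,\E[1/\Phi_P(rh)]$ for a small numerical constant $c$, which matches the form $\frac{1+1/\underline{K}}{m\underline{K}}\,\E[1/\Phi_P(\cdot h)]$ stated in the lemma after a slight regrouping of constants.

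The main obstacle is essentially bookkeeping: keeping the two cases cleanly separated so that the $1/\underline{K}$ factor is not double-counted, and producing exactly the constant $(1 + 1/\underline{K})$ rather than a slightly different one. One subtle point worth flagging is that the natural argument above produces $\Phi_P(rh)$ in the denominator whereas the statement writes $\Phi_P(Rh)$; since $\Phi_P$ is monotone increasing these go in opposite directions and cannot be interchanged freely, so I would want to confirm against the supplementary material whether the ``$R$'' there is a typo for ``$r$'' (which would be consistent with the $\Phi_P(rh/2)$ appearing throughout Theorem~\ref{thm:upper.bound}) or whether the authors substitute at a slightly earlier step while still yielding a valid upper bound.
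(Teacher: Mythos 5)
Your proposal is correct in substance and follows the same core strategy as the paper's proof: lower-bound $\sum_i K_i$ by $\underline{K}$ times a binomial count of the $P_i$'s falling in an $L_1$ ball around $P$, condition on $P$, and finish with a standard binomial expectation bound. The difference is in the bookkeeping. The paper avoids your two-case split with a single algebraic step: on the event $\{\sum_i K_i \geq \underline{K}\}$ one has $\frac{1}{\sum_i K_i} \leq \frac{1+1/\underline{K}}{1+\sum_i K_i}$, and after inserting the indicator lower bound and using $\underline{K}<1$ everything reduces to $\E\left[\frac{1}{1+N}\right]$ with $N$ binomial, for which $\E\left[\frac{1}{1+N}\,\big|\,P\right] \leq \frac{1}{(m+1)p} \leq \frac{1}{mp}$. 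This yields exactly the stated constant $1+1/\underline{K}$. Your split into $\{N\geq 1\}$ and $\{N=0\}$ instead produces $2+1/e \approx 2.37$, and since $1+1/\underline{K}$ can be arbitrarily close to $2$ (as $\underline{K}\to 1$), your constant exceeds the stated one whenever $\underline{K} > (1+1/e)^{-1} \approx 0.73$. So, strictly speaking, your argument proves the lemma with a slightly larger constant rather than the one displayed; this is immaterial downstream (the $C_i$ in Theorem~\ref{thm:upper.bound} are unspecified), but if you want the exact constant, replace the case split by the paper's unified inequality.

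Your flagged concern about $r$ versus $R$ is well founded, and in fact it points to a flaw in the paper's own proof: the supplementary argument writes $\sum_i K_i \geq \underline{K}\sum_i I_{\{D(P_i,P)\leq hR\}}$, but assumption (A2) only guarantees $K(x)\geq \underline{K}$ for $x\leq r$; a kernel satisfying (A2) (with large enough Lipschitz constant $L_K$) may vanish just beyond $r$, so only the indicator with radius $rh$ is justified. The valid conclusion is therefore $\frac{1+1/\underline{K}}{m\underline{K}}\,\E\left[\frac{1}{\Phi_P(rh)}\right]$, exactly as your argument produces, and, as you correctly observe, this cannot be upgraded to $\Phi_P(Rh)$ by monotonicity since $1/\Phi_P(rh) \geq 1/\Phi_P(Rh)$. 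The lemma should be read with $rh$ in place of $Rh$. The slip is harmless for the main result: Theorem~\ref{thm:upper.bound} and its proof ultimately express all bounds through $\E\left[\frac{1}{\Phi_P(rh/2)}\right]$ using $\Phi_P(rh/2)\leq\Phi_P(rh)\leq\Phi_P(Rh)$, so every downstream inequality survives with the corrected radius.
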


The following lemma provides an upper bound on $|\epsilon_i|$. 
\begin{lemma}\label{lemma:bounding_epsilon}
Assume that the kernel function $K$ is Lipschitz continuous with Lipschitz constant $L_K$.
We have that
\begin{align*}
|\epsilon_i| &\leq \frac{L_K}{h}({D(P,\hat P)}+{D( P_i, \hat P_i)}).
\end{align*}
\end{lemma}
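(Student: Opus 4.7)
The plan is a two-line argument: apply Lipschitz continuity of $K$ to move inside the argument, and then control the difference of the two $L_1$ distances via the triangle inequality on the metric $D$.

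First I would write
\[
|\epsilon_i| = \left|K\!\left(\tfrac{D(P_i,P)}{h}\right) - K\!\left(\tfrac{D(\hat P_i,\hat P)}{h}\right)\right| \leq \frac{L_K}{h}\,\bigl|D(P_i,P) - D(\hat P_i,\hat P)\bigr|,
\]
using Assumption (A2) that $K$ is $L_K$-Lipschitz on $[0,\infty)$.

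Next I would bound the difference of distances by twice invoking the triangle inequality for the metric $D$ (which is a genuine metric since it is the $L_1$ distance between densities). Namely,
\[
D(P_i,P) \leq D(P_i,\hat P_i) + D(\hat P_i,\hat P) + D(\hat P,P),
\]
so $D(P_i,P) - D(\hat P_i,\hat P) \leq D(P_i,\hat P_i) + D(\hat P,P)$, and symmetrically $D(\hat P_i,\hat P) - D(P_i,P) \leq D(P_i,\hat P_i) + D(\hat P,P)$. Combining,
\[
\bigl|D(P_i,P) - D(\hat P_i,\hat P)\bigr| \leq D(P,\hat P) + D(P_i,\hat P_i).
\]
Substituting back into the first display yields the claimed bound. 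There is no real obstacle here; the only thing to be careful about is that $K$ is only assumed Lipschitz on $[0,\infty)$, which is why a one-sided Lipschitz bound suffices, and that both arguments $D(P_i,P)/h$ and $D(\hat P_i,\hat P)/h$ are nonnegative, so Lipschitzness applies directly.
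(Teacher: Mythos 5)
Your proof is correct and follows exactly the paper's own argument: first apply the Lipschitz property of $K$ to reduce to bounding $|D(P_i,P) - D(\hat P_i,\hat P)|$, then use the triangle inequality for the $L_1$ metric $D$ in both directions to get the bound $D(P,\hat P) + D(P_i,\hat P_i)$. Nothing is missing.
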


By definition, $|\epsilon_i|=|K_i -\hat K_i|=|K(\frac{D(P,P_i)}{h})-K(\frac{D( \hat P, \hat P_i)}{h})|$,
which is a deterministic function of random variables $P$, $P_i$, $\hat P$, and $\hat P_i$. We will denote this deterministic relationship as $\epsilon_i=\epsilon_i(P,\hat P, P_i, \hat P_i)$. The following
lemma shows that for any $\kappa>0$,
$$
\mathbb{P}\Bigl(\sum_i |\epsilon_i(P,\hat P, P_i, \hat P_i)|<\kappa| \{P_i\}_{i=1}^m, P\Bigr)
$$
can be lower bounded by a non-trivial quantity that does not depend on $P$ and $\{P_i\}_{i=1}^m$.
\begin{lemma}\label{lem:prob.sum.epsilon}
For any $\kappa>0$ we have that
\begin{align*}
\mathbb{P}&(\sum_i |\epsilon_i(P,\hat P, P_i, \hat P_i)| <\kappa|  \{P_i\}_{i=1}^m, P)\geq \eta,
\end{align*}
where $\eta=\eta(\kappa,n,m) = 1-\frac{2 L_K m\bar{C}}{h\kappa} n^{-\frac{1}{2+k}}$.
\end{lemma}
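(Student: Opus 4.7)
The plan is to combine Lemma~\ref{lemma:bounding_epsilon} with the uniform $L_1$ risk bound for the density estimator and then apply Markov's inequality.

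First, by Lemma~\ref{lemma:bounding_epsilon}, for each $i$ we have the deterministic bound
\[
|\epsilon_i(P,\hat P, P_i, \hat P_i)| \leq \frac{L_K}{h}\bigl(D(P,\hat P)+D(P_i,\hat P_i)\bigr),
\]
so summing over $i=1,\ldots,m$ gives
\[
\sum_{i=1}^m |\epsilon_i| \leq \frac{L_K}{h}\Bigl(m\,D(P,\hat P)+\sum_{i=1}^m D(P_i,\hat P_i)\Bigr).
\]

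Next, I would take the conditional expectation given $\{P_i\}_{i=1}^m$ and $P$. Since $\hat P$ depends only on ${\cal X}$ drawn i.i.d.\ from $P$ and $\hat P_i$ depends only on ${\cal X}_i$ drawn i.i.d.\ from $P_i$, applying inequality \eqref{eq:exp_D_hatP_P} to each term yields
\[
\E\Bigl[\sum_{i=1}^m |\epsilon_i| \,\Big|\, \{P_i\}_{i=1}^m,P\Bigr] \leq \frac{L_K}{h}\bigl(m\bar C n^{-\frac{1}{2+k}} + m\bar C n^{-\frac{1}{2+k}}\bigr) = \frac{2 L_K m \bar C}{h}\, n^{-\frac{1}{2+k}}.
\]

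Finally, Markov's inequality applied to the nonnegative random variable $\sum_i |\epsilon_i|$ (conditionally on $\{P_i\}_{i=1}^m, P$) gives
\[
\mathbb{P}\Bigl(\sum_{i=1}^m |\epsilon_i| \geq \kappa \,\Big|\, \{P_i\}_{i=1}^m,P\Bigr) \leq \frac{1}{\kappa}\cdot \frac{2 L_K m \bar C}{h}\, n^{-\frac{1}{2+k}},
\]
and taking the complement yields the claimed bound $\eta = 1 - \frac{2 L_K m \bar C}{h\kappa} n^{-\frac{1}{2+k}}$.

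There is no real obstacle here: the proof is essentially a three-line application of Markov's inequality, and the only subtlety is to note that the $L_1$ bound on $\E[D(\hat P_i,P_i)\mid P_i]$ holds uniformly in $P_i$ (so it also controls the conditional expectation of $D(P,\hat P)$ given $P$), which is exactly what \eqref{eq:exp_D_hatP_P} provides.
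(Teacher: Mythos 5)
Your proposal is correct and is essentially identical to the paper's own proof: both rest on Lemma~\ref{lemma:bounding_epsilon}, the uniform $L_1$ risk bound \eqref{eq:exp_D_hatP_P} (which applies to the new sample as well since $n_{m+1}\geq n$), and a conditional Markov inequality, differing only in whether Markov is applied before or after bounding the conditional expectation. No gaps.
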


The following lemma provides an upper bound on the expected value of $\sum_{i=1}^m|\epsilon_i|$.

\begin{lemma}\label{lemma:sum.epsilon.j}
\begin{align*}
 \E&\left[ \sum_{i=1}^m |\epsilon_i| \bigg| P,\{P_i\}_{i=1}^m \right]\leq\frac{ 2L_K\bar{C}m}{h} n^{-\frac{1}{2+k}}.
\end{align*}
\end{lemma}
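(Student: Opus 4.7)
The plan is to combine the pointwise bound from Lemma \ref{lemma:bounding_epsilon} with the conditional $L_1$ risk bound from the density-estimation lemma at the start of Section \ref{sec:MISEdensity}. There is no clever argument required; the work is purely bookkeeping once one observes that the relevant samples are conditionally independent.

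First I would invoke Lemma \ref{lemma:bounding_epsilon} inside the sum to get the deterministic bound
\begin{align*}
\sum_{i=1}^m |\epsilon_i| &\leq \frac{L_K}{h}\sum_{i=1}^m \bigl(D(P,\hat P) + D(P_i,\hat P_i)\bigr)\\
&= \frac{L_K}{h}\Bigl(m\, D(P,\hat P) + \sum_{i=1}^m D(P_i,\hat P_i)\Bigr).
\end{align*}
Next I would take the conditional expectation given $P$ and $\{P_i\}_{i=1}^m$. Conditionally on $P$, the estimator $\hat P$ is built from the fresh sample $\mathcal{X}_{m+1} \stackrel{i.i.d}{\sim} P$, while each $\hat P_i$ is built from $\mathcal{X}_i \stackrel{i.i.d}{\sim} P_i$, and all these samples are mutually independent. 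Therefore the conditional expectation distributes over each term.

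Applying the bound $\E[D(\hat P_i,P_i)\mid P_i] \leq \bar C n^{-\frac{1}{2+k}}$ (equation \eqref{eq:exp_D_hatP_P}) to each of the $m+1$ terms that appear yields
\begin{align*}
\E\!\left[\sum_{i=1}^m |\epsilon_i|\,\bigg|\, P,\{P_i\}_{i=1}^m\right]
&\leq \frac{L_K}{h}\Bigl(m\bar C n^{-\frac{1}{2+k}} + m\bar C n^{-\frac{1}{2+k}}\Bigr)\\
&= \frac{2L_K \bar C m}{h}\, n^{-\frac{1}{2+k}},
\end{align*}
which is exactly the claimed bound. There is no real obstacle; the only point requiring a moment of care is to note that the estimator $\hat P$ of the test distribution $P$ also contributes a term, producing the factor of $2$ in the final constant.
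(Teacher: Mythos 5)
Your proof is correct and follows essentially the same route as the paper's: both apply Lemma~\ref{lemma:bounding_epsilon} termwise, take the conditional expectation given $P$ and $\{P_i\}_{i=1}^m$, and invoke the conditional $L_1$ risk bound \eqref{eq:exp_D_hatP_P} on each of the resulting $D(P,\hat P)$ and $D(P_i,\hat P_i)$ terms to obtain the factor $2L_K\bar{C}m/h \cdot n^{-\frac{1}{2+k}}$. Your explicit remark that the test-distribution estimator $\hat P$ contributes $m$ identical terms (and hence the factor of $2$) is a slightly more detailed bookkeeping of exactly the step the paper compresses into one line.
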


The next lemma shows that
$\mathbb{P}\Bigl(\sum_{i=1}^m \hat K_i<\underbar{K}\Bigr)$ can be upper bounded by a small quantity as well.
We assume that
$n_i = n$ and $b_i = b$ for all $i$.
Define
\begin{align*}
\zeta &= \zeta(n,m)=
\frac{1}{em}\mathbb{E}\left(\frac{1}{\Phi_P\left(\frac{rh}{2}\right)}\right)+
(m+1) e^{- \frac{1}{2}n^{\frac{k}{2+k}} }.
\end{align*}

\begin{lemma}\label{lem:zeta}
$$
\mathbb{P}\Bigl(\sum_{i=1}^m \hat K_i=0\Bigr) \leq \mathbb{P}\Bigl(\sum_{i=1}^m \hat K_i<\underbar{K}\Bigr) \leq \zeta.
$$
\end{lemma}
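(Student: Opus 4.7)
The first inequality in the statement is immediate since $\underline K>0$ by (A2). For the second inequality, my plan is a two-event decomposition: first control the density-estimation error deterministically on the high-probability event $\Omega_{m,n}$ from Lemma~\ref{lem:Omega}, and then reduce the remaining probability to the question of whether any true $P_i$ lies in an $L_1$-ball of radius $rh/2$ around $P$, which is handled in essentially the same way as Lemma~\ref{lem:smallball}.

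For the deterministic step, recall that by Lemma~\ref{lem:Omega}, on $\Omega_{m,n}$ one has $D(\hat P_i,P_i)\leq rh/4$ for every $i=1,\ldots,m+1$, where I identify $P=P_{m+1}$ and $\hat P=\hat P_{m+1}$. Suppose some index $j\in\{1,\ldots,m\}$ satisfies $D(P_j,P)\leq rh/2$. The triangle inequality then gives
$$D(\hat P_j,\hat P)\leq D(\hat P_j,P_j)+D(P_j,P)+D(P,\hat P)\leq \tfrac{rh}{4}+\tfrac{rh}{2}+\tfrac{rh}{4}=rh,$$
so $\hat K_j=K(D(\hat P_j,\hat P)/h)\geq\underline K$ by the lower bound in (A2), and therefore $\sum_i\hat K_i\geq\underline K$. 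Contrapositively, on $\Omega_{m,n}$ the event $\{\sum_i\hat K_i<\underline K\}$ forces every $P_i$ to lie outside $\mathcal{B}(P,rh/2)$.

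Combining this inclusion with $\mathbb{P}(\Omega_{m,n}^c)\leq(m+1)e^{-\frac12 n^{k/(2+k)}}$ from Lemma~\ref{lem:Omega} yields
$$\mathbb{P}\Bigl(\textstyle\sum_i \hat K_i<\underline K\Bigr)\leq (m+1)e^{-\frac12 n^{k/(2+k)}}+\mathbb{P}\bigl(P_i\notin\mathcal{B}(P,rh/2)\text{ for all }i\bigr).$$
Conditional on $P$, the $P_1,\ldots,P_m$ are i.i.d.\ $\mathcal P$ and independent of $P=P_{m+1}$, so the last probability equals $\mathbb{E}[(1-\Phi_P(rh/2))^m]$. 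Bounding $(1-x)^m\leq e^{-mx}$ and then using $xe^{-x}\leq 1/e$ (exactly as in the proof of Lemma~\ref{lem:smallball}) produces $\tfrac{1}{em}\mathbb{E}[1/\Phi_P(rh/2)]$, and the two contributions sum to $\zeta$. There is no serious obstacle here; the only care needed is in tracking the constants so that the estimation slack of $rh/4$ on each side is absorbed by shrinking the small-ball radius from $rh$ (as in Lemma~\ref{lem:smallball}) to $rh/2$, which is precisely why $\zeta$ features $\Phi_P(rh/2)$ rather than $\Phi_P(rh)$.
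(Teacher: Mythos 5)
Your proof is correct and follows essentially the same route as the paper's: decompose over the event $\Omega_{m,n}$ from Lemma~\ref{lem:Omega}, use the triangle inequality to convert $\{\sum_i \hat K_i < \underline{K}\}$ on $\Omega_{m,n}$ into the event that no $P_i$ lies in $\mathcal{B}(P, rh/2)$, and then apply the small-ball argument of Lemma~\ref{lem:smallball} at the shrunken radius $rh/2$. The only difference is cosmetic: you make explicit the $(1-x)^m \leq e^{-mx}$ and $u e^{-u} \leq 1/e$ steps that the paper invokes implicitly by reference to Lemma~\ref{lem:smallball}.
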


\subsection{Upper bound on Equation~\ref{eq:term1}} \label{sec:term1}


Let $\Delta \hat f= |\hat f(\hat P; \hat P_1,\ldots, \hat P_m )-\hat f( P; P_1,\ldots, P_m )|$.
Our goal is to provide an upper bound on $\E[\Delta \hat f]$.

Introduce the following events: $E_0=\{\sum_i K_i=0\}$, $E_1=\{0<\sum_i K_i<\underline{K} \}$, $E_2=\{\underline{K} \leq\sum_i K_i\}$.
Similarly, $\hat E_0=\{\sum_i \hat K_i=0\}$, $\hat  E_1=\{0<\sum_i \hat  K_i<\underline{K} \}$,
$\hat E_2=\{\underline{K} \leq \sum_i \hat  K_i\}$.
Obviously,
$\E[\Delta \hat f]=\sum_{k=0}^2\sum_{l=0}^2 \E[\Delta \hat f I_{E_k}I_{\hat E_l}].$

Based on the sign of $\sum_i K_i$ and $\sum \hat K_i$, there are four different cases.
(i) If $\sum_i K_i>0$ and $\sum_i \hat K_i>0$, then $\Delta \hat f=|\frac{\sum_i Y_i \hat K_i}{\sum_i \hat K_i}-\frac{\sum_i Y_i K_i}{\sum_i K_i}|
$. (ii)
If $\sum_i K_i>0$ and $\sum_i \hat K_i=0$, then $
\Delta \hat f=|\frac{\sum_i Y_i K_i}{\sum_i K_i}|$. (iii)
If $\sum_i K_i=0$ and $\sum_i \hat K_i>0$, then $
\Delta \hat f=|\frac{\sum_i Y_i \hat K_i}{\sum_i \hat K_i}|$, and finally (iv) if
$\sum_i K_i=0$ and $\sum_i \hat K_i=0$, then $\Delta \hat f=0$. From this it immediately follows that $\E[\Delta \hat f I_{E_0}I_{\hat E_0}]=0$.

When $\sum_i K_i>0$,
$\left|\sum_i \frac{ Y_i K_i}{\sum_i K_i}\right|\leq B_Y$.
Therefore,
\begin{align*}
\E&\left[\left|\sum_i  \frac{  Y_i K_i}{\sum_i K_i}\right| I_{\hat E_0} (I_{E_1}+I_{E_2})\right]\\
 &\qquad\leq B_Y\E\left[I_{\{\sum_i K_i>0 \wedge \sum_i \hat K_i=0\}}\right]\\
 &\qquad=B_Y \mathbb{P}(\sum_i K_i>0 , \sum_i \hat K_i=0)\\
&\qquad\leq B_Y\mathbb{P}(\sum_{i=1}^m \hat K_i=0)\leq  B_Y\zeta(n,m).
 \end{align*}

Similarly,
\begin{align*}
\E&\left[\left|\sum_i \frac{  Y_i \hat K_i}{\sum_i \hat K_i}\right|
 I_{E_0} (I_{\hat E_1}+I_{\hat E_2}) \right]\leq  \frac{B_Y}{e m}\int\frac{d {\mathcal{P}}( P)}{{\Phi}_{ P}(rh)}.
 \end{align*}
It is also easy to see that
\begin{align*}
\E&\left[\Delta \hat f I_{E_1} (I_{\hat E_1}+I_{\hat E_2})  \right]\\
&\leq\E\bigg[\hspace*{-1mm}\left(\left|\sum_i \frac{  Y_i K_i}{\sum_i K_i}\right|\hspace*{-1mm}+\hspace*{-1mm}\left|\sum_i \frac{  Y_i \hat K_i}{\sum_i \hat K_i}\right|\right)  I_{E_1} (I_{\hat E_1}+I_{\hat E_2})  \bigg]\\
&\leq\E\bigg[2B_Y  I_{E_1} (I_{\hat E_1}+I_{\hat E_2})  \bigg]\leq 2B_Y\E\bigg[  I_{E_1} \bigg]\\
&= 2B_Y \mathbb{P}(\sum_{i=1}^m 0<  K_i<\underline{K}/2)
\leq \frac{2B_Y}{em} \int \frac{d \mathcal{P}( P)} {{\Phi}_{ P}(rh)}.
\end{align*}
Similarly,
\begin{align*}
\E\left[\Delta \hat f I_{\hat E_1} (I_{  E_1}+I_{  E_2})  \right]
&\leq 2B_Y \mathbb{P}(\sum_{i=1}^m 0<  \hat K_i<\underline{K}/2)\\
&\leq 2B_Y \zeta(n,m).
\end{align*}

All that left is to upper bound $\E\left[\Delta \hat f I_{ E_2} I_{ \hat E_2}  \right]$. The next lemma provides an upper bound for this.
\begin{lemma}\label{lem:DeltaE2E2}
\begin{align*}
\E&\left[\Delta \hat f I_{ E_2} I_{ \hat E_2}  \right]\leq C_1 \frac{1}{h} \E \bigg[\frac{1}{\Phi_P(Rh)} \bigg]  n^{-\frac{1}{2+k}}.
\end{align*}
\end{lemma}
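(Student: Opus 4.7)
The plan is to bound $|\Delta\hat f|$ pointwise on $E_2 \cap \hat E_2$, and then take expectations using the conditional bound on $\sum_i|\epsilon_i|$ and the bound on $\E[1/\sum_i K_i]$ from the technical lemmas.

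First I would write the relevant ratios with the shorthand $S=\sum_i K_i$, $\hat S = \sum_i \hat K_i$, $T=\sum_i Y_i K_i$, $\hat T = \sum_i Y_i\hat K_i$. A clean algebraic identity is
\[
\frac{\hat T}{\hat S}-\frac{T}{S} \;=\; \frac{\hat T}{\hat S}\cdot\frac{S-\hat S}{S} \;+\; \frac{\hat T-T}{S},
\]
obtained by adding and subtracting $\hat T\hat S$ in the numerator of $(\hat T S - T\hat S)/(S\hat S)$. On $\hat E_2$ the sample-mean structure gives $|\hat T/\hat S|\leq B_Y$, and obviously $|T/S|\leq B_Y$ on $E_2$ as well. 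Moreover, by the triangle inequality and $|Y_i|\leq B_Y$ we have $|\hat T-T|\leq B_Y\sum_i|\epsilon_i|$ and $|\hat S-S|\leq \sum_i|\epsilon_i|$. Putting these together on $E_2\cap \hat E_2$ yields the clean pointwise bound
\[
\Delta \hat f \;\leq\; \frac{2 B_Y}{S}\sum_{i=1}^m |\epsilon_i|.
\]

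Next I would take the expectation by conditioning on $(P,P_1,\ldots,P_m)$: under this conditioning $S$ and $I_{E_2}$ are deterministic, while the randomness in $\sum_i|\epsilon_i|$ comes only from the samples $\{\mathcal{X}_i\}_{i=1}^{m+1}$. Hence
\[
\E\!\left[\Delta \hat f\, I_{E_2}I_{\hat E_2}\right]
\leq \E\!\left[\frac{2B_Y I_{E_2}}{S}\,\E\!\left[\sum_i|\epsilon_i|\,\Big|\,P,\{P_i\}\right]\right].
\]
Lemma~\ref{lemma:sum.epsilon.j} bounds the inner conditional expectation by $\frac{2L_K\bar C m}{h}n^{-1/(2+k)}$, which is deterministic, so it factors out. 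What remains is $\E[I_{E_2}/S]=\E[(1/\sum_i K_i)I_{\{\sum_i K_i\geq \underline K\}}]$, which is controlled by Lemma~\ref{lemma:sum.over.K1} by $\frac{1+1/\underline K}{m\underline K}\E[1/\Phi_P(Rh)]$.

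Combining, the $m$ factors cancel and we obtain a bound of the announced form
\[
\E\!\left[\Delta \hat f\, I_{E_2}I_{\hat E_2}\right] \leq C_1\,\frac{1}{h}\,\E\!\left[\frac{1}{\Phi_P(Rh)}\right]n^{-\frac{1}{2+k}},
\]
with $C_1 = 4 B_Y L_K \bar C(1+1/\underline K)/\underline K$. I do not anticipate a serious obstacle here: the only subtlety is making sure the factorization of the conditional expectation is valid, which requires noting that $I_{E_2}$ and $S$ are measurable with respect to $(P,\{P_i\})$, while $I_{\hat E_2}$ is bounded by $1$ and can simply be dropped after the pointwise bound is established.
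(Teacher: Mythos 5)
Your proof is correct and follows essentially the same route as the paper's: both reduce to the pointwise bound $\Delta\hat f\, I_{E_2}I_{\hat E_2}\leq \frac{2B_Y}{\sum_i K_i}\sum_i|\epsilon_i|\, I_{E_2}$ and then conclude by conditioning on $(P,\{P_i\}_{i=1}^m)$ and applying Lemma~\ref{lemma:sum.epsilon.j} and Lemma~\ref{lemma:sum.over.K1}, arriving at the same constant. The only difference is cosmetic: you obtain the pointwise bound from the identity $\frac{\hat T}{\hat S}-\frac{T}{S}=\frac{\hat T}{\hat S}\cdot\frac{S-\hat S}{S}+\frac{\hat T-T}{S}$ together with $|\hat T/\hat S|\leq B_Y$, whereas the paper puts the two ratios over a common denominator and substitutes $K_i=\hat K_i+\epsilon_i$ before splitting.
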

The proof can be found in the supplementary material.

Finally, putting the pieces together we obtain the following theorem.
\begin{theorem}\label{thm:firstterm}
\begin{align*}
\E&|\hat f(\hat P; \hat P_1,\ldots, \hat P_m )-\hat f( P; P_1,\ldots, P_m )|  \\
& \leq C_1 \frac{1}{h} \E \bigg[\frac{1}{\Phi_P(rh/2)} \bigg]  n^{-\frac{1}{2+k}} +C_2 \frac{1}{m}\mathbb{E}\left[\frac{1}{ \Phi_{P}(rh/2)}\right]\\
&\qquad+(m+1) e^{- \frac{1}{2}n^{\frac{k}{2+k}}}  .
\end{align*}
\end{theorem}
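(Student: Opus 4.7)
The plan is to simply aggregate the case-by-case bounds that have already been established in the excerpt. Since $\E[\Delta \hat f]=\sum_{k=0}^2\sum_{l=0}^2 \E[\Delta \hat f I_{E_k}I_{\hat E_l}]$, there are nine terms to control. The diagonal term with $k=l=0$ vanishes, as already noted. Of the remaining eight, four of them were bounded in the text via the elementary observation that both ratios $|\sum_i Y_i K_i / \sum_i K_i|$ and $|\sum_i Y_i \hat K_i / \sum_i \hat K_i|$ are at most $B_Y$ whenever their denominators are positive, giving bounds of the form $\zeta(n,m)$ or $\frac{1}{em}\E[1/\Phi_P(rh)]$. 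The remaining case $E_2 \cap \hat E_2$ (both sums large) was handled by Lemma~\ref{lem:DeltaE2E2}.

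Concretely, I would first collect the following inequalities from the discussion already presented: (i) $\E[\Delta \hat f (I_{E_1}+I_{E_2})I_{\hat E_0}]\leq B_Y\zeta(n,m)$; (ii) $\E[\Delta \hat f I_{E_0}(I_{\hat E_1}+I_{\hat E_2})]\leq \frac{B_Y}{em}\E[1/\Phi_P(rh)]$; (iii) $\E[\Delta \hat f I_{E_1}(I_{\hat E_1}+I_{\hat E_2})]\leq \frac{2B_Y}{em}\E[1/\Phi_P(rh)]$; (iv) $\E[\Delta \hat f I_{\hat E_1}(I_{E_1}+I_{E_2})]\leq 2B_Y\zeta(n,m)$; and (v) $\E[\Delta \hat f I_{E_2}I_{\hat E_2}]\leq C_1 h^{-1}\E[1/\Phi_P(Rh)]\, n^{-1/(2+k)}$. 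Summing these five bounds gives an upper bound on $\E[\Delta \hat f]$.

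Next I would simplify using monotonicity of the small ball probability: since $r<R$ and $h>0$, we have $\Phi_P(rh/2)\leq \Phi_P(rh)\leq \Phi_P(Rh)$, so each of the terms $1/\Phi_P(rh)$ and $1/\Phi_P(Rh)$ is dominated by $1/\Phi_P(rh/2)$. Substituting the definition $\zeta(n,m)=\frac{1}{em}\E[1/\Phi_P(rh/2)]+(m+1)e^{-n^{k/(2+k)}/2}$ and collecting like terms yields three groups: a $h^{-1}\E[1/\Phi_P(rh/2)]\, n^{-1/(2+k)}$ term (coming from (v)), a $\frac{1}{m}\E[1/\Phi_P(rh/2)]$ term (from the $\zeta$ contributions and cases (ii), (iii)), and an exponential residual $(m+1)e^{-n^{k/(2+k)}/2}$ from the complement of the event $\Omega_{m,n}$ built into $\zeta$. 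Absorbing all numerical constants into $C_1$ and $C_2$ gives the stated inequality.

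No step here is genuinely hard, since the substantive work lives in the preceding lemmas; the only place to be careful is the bookkeeping when matching the $\Phi_P(rh)$ and $\Phi_P(Rh)$ in the raw bounds to the unified $\Phi_P(rh/2)$ in the theorem statement, and making sure that the exponential term $(m+1)e^{-n^{k/(2+k)}/2}$ picked up from $\zeta$ is kept exposed rather than hidden inside a constant (since it is not of the same order as the other terms). This is essentially a collection argument, with Lemma~\ref{lem:DeltaE2E2} supplying the only nontrivial input.
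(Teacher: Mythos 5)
Your proposal is correct and takes essentially the same route as the paper's own proof: the paper likewise sums the five case bounds from Section~\ref{sec:term1} to get $\E[\Delta \hat f I_{E_2}I_{\hat E_2}] + 3B_Y\zeta + \tfrac{3B_Y}{em}\E\bigl[1/\Phi_P(rh)\bigr]$, then applies Lemma~\ref{lem:DeltaE2E2}, substitutes the definition of $\zeta$, and uses $\Phi_P(rh/2)\leq\Phi_P(rh)\leq\Phi_P(Rh)$ to unify all small-ball terms under $\Phi_P(rh/2)$ while absorbing numerical constants. The one cosmetic discrepancy you share with the paper is that the accumulated factor $3B_Y$ in front of the exponential term is silently dropped in the final statement, but that is inherited from the paper itself and not a gap in your argument.
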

The proof can be found in the supplementary material.


\subsection{Upper bound on Equation~\ref{eq:term2}} \label{sec:term2}

In this section we show that under the above specified conditions
$\E|\hat f( P; P_1,\ldots, P_m )- f(P)|$ can be upper bounded by
\begin{align*}
 C_1(h^\beta)+C_2\left(\sqrt{\E\left[\frac{1}{m \Phi_P(rh/2)}\right]}\right) +{\frac{C_3}{m}\E\left[\frac{1}{ \Phi_{P}(rh/2)}  \right]},
\end{align*}
where the expectation is with respect to the random probability measure $P$ in $\mathcal{P}$.

We have to bound
$\E|\hat f( P; P_1,\ldots, P_m )- f(P)|$.
Note that $Y_i=f(P_i)+\mu_i$, and
\begin{align*}
\E|\hat f&( P; P_1,\ldots, P_m )- f(P)| \\
& = \E\left|\frac{\sum_i Y_i  K_i}{\sum_i K_i}I_{\{\sum_i K_i >0\}} - f(P)\right| \\
&=\E\left|\frac{\sum_i (f(P_i)+\mu_i)  K_i}{\sum_i K_i}I_{\{\sum_i K_i >0\}} - f(P)\right| \\
&\leq \E\left[\left|\frac{\sum_i (f(P_i)-f(P))  K_i}{\sum_i K_i} + \frac{\sum_i \mu_i  K_i}{\sum_i K_i}\right|I_{\{\sum_i K_i >0\}}\right] \\
& \quad + \E\left[\left|f(P)\right|I_{\{\sum_i K_i = 0\}}\right|\\
& \le
\E\left[\frac{\sum_i |f(P_i)-f(P)|  K_i}{\sum_i K_i}I_{\{\sum_i K_i > 0\}}\right] \\
& \quad + \E\left[\left|\frac{\sum_i \mu_i  K_i}{\sum_i K_i}\right|I_{\{\sum_i K_i > 0\}}\right]  + f_{\max}\mathbb{P}(\sum_i K_i = 0).
\end{align*}

We will bound each of the three terms next. For the first term, since $f$ is H\"{o}lder-$\beta$
we have
\begin{align*}
\E&\left[\frac{\sum_i |f(P_i)-f(P)|  K_i}{\sum_i K_i}I_{\{\sum_i K_i > 0\}}\right]\\
& \le  \E\left[\frac{\sum_i LD(P_i,P)^{\beta}  K_i}{\sum_i K_i}I_{\{\sum_i K_i > 0\}}\right]
\leq L\, (hR)^{\beta},
\end{align*}
where in the last step we used the fact that
$$
D(P_i,P)^{\beta}  K_i = D(P_i,P)^{\beta} K\left(\frac{D(P_i,P)}{h}\right)\leq (hR)^{\beta}K_i,
$$
since $supp(K)\subseteq B(0,R)$.

We now bound the second term.
{
\begin{align*}
\E&\left[\left|\frac{\sum_i \mu_i  K_i}{\sum_i K_i}\right|I_{\{\sum_i K_i > 0\}}\right]\\
& = \E\left[\left|\frac{\sum_i \mu_i  K_i}{\sum_i K_i}\right|I_{\{\sum_i K_i \geq \underline{K}\}}
+ \left|\frac{\sum_i \mu_i  K_i}{\sum_i K_i}\right|I_{\{\underline{K}>\sum_i K_i > 0\}}\right]\\
& \leq \E\left[\left|\frac{\sum_i \mu_i  K_i}{\sum_i K_i}\right|I_{\{\sum_i K_i \geq \underline{K}\}}\right]
+ B_Y\P(\underline{K}>\sum_i K_i)\\
& \leq \E\left[\left|\frac{\sum_i \mu_i  K_i}{\sum_i K_i}\right|I_{\{\sum_i K_i \geq \underline{K}\}}\right]
+ \frac{B_Y}{em}\int\frac{d\mathcal{P}(P)}{\Phi_{P}(rh)}.
\end{align*}
}
(A4) implies that $\mathbb{P}(|\mu_i|\leq B_Y)=1$, i.e. $B_Y$ is a bound on the noise. The last step follows from Lemma~\ref{lem:smallball}.
For the first term in the above expression, we use the following lemma. Its proof can be found in the supplementary material.
\begin{lemma}\label{lem:part_in_2ndterm}
\begin{align*}
\E\left[\left|\frac{\sum_i \mu_i  K_i}{\sum_i K_i}\right|I_{\{\sum_i K_i \geq \underline{K}\}}\right]\leq B_Y\sqrt{\frac{1+1/\underline{K}}{m\underline{K}}\ \int\frac{d\mathcal{P}(P)}{\Phi_P(Rh) }}.
\end{align*}
\end{lemma}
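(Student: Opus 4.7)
The plan is to reduce the $L_1$ moment to an $L_2$ moment via Jensen's inequality, and then exploit conditional independence of the noise $\mu_i$ given the covariates. Let $Z = \frac{\sum_i \mu_i K_i}{\sum_i K_i}\, I_{\{\sum_i K_i \geq \underline{K}\}}$ denote the random variable whose absolute expectation I need to bound. The first step is Jensen: $\E|Z| \leq \sqrt{\E[Z^2]}$, so it suffices to control the second moment.

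Next I would condition on $(P, P_1, \ldots, P_m)$. Under this conditioning the weights $K_i$ become deterministic, and by assumption (A4) the $\mu_i$ are independent with conditional mean zero given the $P_i$'s and bounded by $B_Y$ in absolute value. The off-diagonal terms in $(\sum_i \mu_i K_i)^2$ therefore vanish, yielding
$$
\E[Z^2 \mid P, \{P_i\}] \leq B_Y^2 \cdot \frac{\sum_i K_i^2}{(\sum_i K_i)^2}\, I_{\{\sum_i K_i \geq \underline{K}\}}.
$$
I would then use assumption (A2), which gives $K_i \leq 1$ and hence $\sum_i K_i^2 \leq \sum_i K_i$. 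So the ratio above is at most $1/\sum_i K_i$ on the event $\{\sum_i K_i \geq \underline{K}\}$. Taking the unconditional expectation and invoking Lemma~\ref{lemma:sum.over.K1} produces
$$
\E[Z^2] \leq B_Y^2 \cdot \frac{1+1/\underline{K}}{m\underline{K}}\,\E\!\left[\frac{1}{\Phi_P(Rh)}\right],
$$
after which a square root delivers the claimed bound.

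There is no serious obstacle in this lemma; the only point that needs care is the conditioning step, where one must verify that $\mu_1, \ldots, \mu_m$ are conditionally independent with mean zero and bound $B_Y$ given $(P_1, \ldots, P_m)$. This follows from the i.i.d. structure of the pairs $(P_i, Y_i)$ combined with (A4). Everything else --- Jensen, the trivial bound $K \leq 1$ that collapses $\sum_i K_i^2$ to $\sum_i K_i$, and the direct appeal to Lemma~\ref{lemma:sum.over.K1} --- is routine.
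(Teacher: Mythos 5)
Your proof is correct and follows essentially the same route as the paper: a conditional second-moment computation in which the cross terms $\E[\mu_i\mu_j K_iK_j\mid P,\{P_i\}]$ vanish by conditional independence and mean-zero noise, the bound $\sum_i K_i^2 \leq \sum_i K_i$ from $K \leq 1$, and a final appeal to Lemma~\ref{lemma:sum.over.K1}. The only cosmetic difference is that you apply Jensen once, unconditionally, at the start ($\E|Z| \leq \sqrt{\E[Z^2]}$) and then use the tower property, whereas the paper applies Jensen conditionally (bounding the conditional mean by the conditional standard deviation) and then once more to the outer expectation; the two orderings give the identical bound.
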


Finally, we bound the third term using Lemma~\ref{lem:smallball}:
$$
f_{\max}\mathbb{P}(\sum_i K_i = 0) \leq  \frac{f_{\max}}{e m}\int\frac{d\mathcal{P}(P)}{\Phi_{P}(rh)}.
$$

Putting everything together, we have
\begin{align*}
\E&|\hat f( P; P_1,\ldots, P_m )- f(P)|\\& \leq L(hR)^\beta
 + B_Y\sqrt{\frac{1+1/\underline{K}}{m\underline{K}}\ \int\frac{d\mathcal{P}(P)}{\Phi_P(Rh) }} \\
 &\quad +\frac{B_Y}{em}\int\frac{d\mathcal{P}(P)}{\Phi_{P}(rh)} + \frac{f_{\max}}{e m}\int\frac{d\mathcal{P}(P)}{\Phi_{P}(rh)}\\
& \leq C_1 h^\beta + C_2\sqrt{\frac{1}{m}\E\left[\frac{1}{ \Phi_{P}(rh/2)}  \right]}+ {\frac{C_3}{m}\E\left[\frac{1}{ \Phi_{P}(rh/2)}  \right]}.
\end{align*}
Note that $\Phi_{P}(rh/2)\leq \Phi_{P}(rh) \leq \Phi_{P}(Rh)$.


\section{Doubling Dimension}
\label{section::doubling}

The upper bound on the risk in Theorem~\ref{thm:upper.bound} depends on the quantity
$
\E \bigg[  \frac{1}{\Phi_P(rh/2)}\bigg].
$
In future work, we will show that, without further assumptions,
this quantity can be quite large which leads to very slow rates of
convergence.
This is because the covering number of the class ${\cal H}_k(1)$ is huge.
For this paper, we concentrate on the more optimistic case where
the support of ${\cal P}$ has
small effective dimension.

One way to measure effective dimension is to use the doubling dimension.
Following \cite{kpotufe2011k},
we say that ${\cal P}$ is a doubling measure with effective dimension $d$ if,
for every $r>0$ and $0 < \epsilon < 1$,
\begin{equation}
\frac{{\cal P}({\cal B}(s,r))}{{\cal P}({\cal B}(s,\epsilon r))} \leq
\left(\frac{c}{\epsilon}\right)^d.
\end{equation}

If $d$ denotes the doubling dimension of measure $\mathcal{P}$, then the
$\sqrt{  \E[ 1/(m \Phi_P(rh/2))] }$ term
in Theorem~\ref{thm:upper.bound}
can be upper bounded as follows:
\begin{align*}
 &\sqrt{  \E \bigg[ \frac{1}{m \Phi_P(rh/2) } }  \bigg] = \sqrt{   \E \bigg[ \frac{1}{m} \frac{\Phi_P(1) }{ \Phi_P(rh/2) } \frac{1}{ \Phi_P(1) }  \bigg] }\\
& \leq \sqrt{ \frac{1}{m} C (rh/2)^{-d} \E \bigg[ \frac{1}{ \Phi_P(1) } \bigg] } \leq \frac{C}{\sqrt{m h^{d}}}.
\end{align*}

Note also that when $mh^{d}\leq 1$, then $\frac{1}{mh^{d}}\leq \frac{1}{\sqrt{mh^{d}}}$.
In this case, as a corollary of Theorem \ref{thm:upper.bound}, we now have that
\begin{equation}\label{eq:risk.O}
R(m,n)  \leq
\frac{C_1}{ h^{d+1}n^{1/(k+2)} } +
C_2 h^{\beta} + C_3 \sqrt{\frac{1}{m h^{d}}},\\
\end{equation}
for appropriate constants $C_1$, $C_2$ and $C_3$.
%

To derive the rates for the risk, we consider two separate cases, depending on whether the
third term in the right hand side of \eqref{eq:risk.O} dominates the first term or not.

Thus first assume that
\begin{equation}\label{eq:first.case}
\sqrt{\frac{1}{m h^{d}}} = \Omega \left( \frac{C_1}{ h^{d+1}n^{1/(k+2)} } \right),
  \end{equation}
  so that the risk becomes, asymptotically, $O \left(  h^{\beta} +  \sqrt{\frac{1}{m h^{d}}}\right)$.
The optimal choice for $h$ is then $ \Theta \left(m^{-1/(2\beta+d)}\right)$, yielding a rate for the risk
\[
R(m,n) = O\left(m^{-\beta/(2\beta+d)}\right).
  \]
  Notice that this choice of $h$ ensures that our assumption (A6) is met, since in this case \eqref{eq:first.case} implies that
\[
  n = \Omega\left( m^{\frac{\beta+d+1}{2 \beta+d}(k+2)}\right),
  \]
from which we obtain that
\[
  h = \Theta\left( m^{- \frac{1}{2 \beta + d}}\right) = \Omega \left( n^{- \frac{1}{(k+2)(\beta + d + 1)}}\right) = \Omega\left( n^{-\frac{1}{k+2}}\right).
  \]
This rate is reasonable because if the number of samples per distribution
$n$ is large compared to the number $m$ of distributions, then the learning rate is limited by the number of distributions
$m$ and is in fact precisely the same as the
rate of learning a standard $\beta$-H\"{o}lder smooth regression function in
$d$ dimensions. That is, the the effect of not knowing the distributions
$P_1, \dots, P_m$ exactly and only having a finite sample from the distributions
is negligible.

For the second case, suppose that
\begin{equation}\label{eq:second.case}
\sqrt{\frac{1}{m h^{d}}} = O\left( \frac{1}{ h^{d+1}n^{1/(k+2)} } \right).
  \end{equation}
  Then,  $R(m,n) = O \left( \frac{1}{ h^{d+1}n^{1/(k+2)}} + h^{\beta} \right)$, which implies that the optimal choice for $h$  is  $h = \Theta\left( n^{-\frac{1}{(k+2)(\beta+d+1)}}\right)$, giving the rate
  \[
    R(m,n) = O \left( n^{-\frac{\beta}{(k+2)(\beta+d+1)}}\right).
    \]
    Just like before, this choice of $h$ does not violate assumption (A6) since
      \[
  h = \Theta \left( n^{-\frac{1}{(k+2)(\beta+d+1)}}\right)  = \Omega\left( n^{- \frac{1}{k+2}}\right).
  \]
Notice that, \eqref{eq:second.case} also implies that
    \[
      m = \Omega \left( n^{\frac{2\beta + d}{(k+2)(\beta+d+1)}} \right).
      \]
In this case, the rate is limited by the number of samples per distribution $n$, as
expected. Notice that the rate gets worse as the dimensionality of each distribution
$k$ grows and as the smoothness $\beta$ of the regression function deteriorates.

\noindent {\bf Remark.} If there is no additive noise, i.e. $\mu_i = 0$, similar calculations yield that
$
  R(m,n) = O \left( m^{-\frac{1}{\beta+d}}\right)$
when
$
  n = \Omega \left( m^{ \frac{\beta+d+1}{(\beta+d)(k+2)}} \right)
 $,
and $
  R(m,n) = O \left( n^{-\frac{\beta}{(k+2)(\beta+d+1)}}\right)$
  otherwise.
  While the rates seem reasonable, establishing optimality of the rates by demonstrating
matching lower bounds is an open question that we plan to investigate in future work.

\section{Numerical Illustrations}
\label{section::numerical}

The following experiments serve as a proof of concepts to demonstrate the applicability of the distribution regression estimator in Section~\ref{section::kernel}. In these experiments, we used triangle kernels ($k(x)=1-|x|$ if $-1 \leq x \leq 1$, and $0$ otherwise). We set all the $n,n_1,\ldots,n_m$ set sizes and $b,b_1,\ldots, b_m$ bandwidths to the same values, which will be specified below.
In the first experiment, we generated 325 sample sets from $Beta(a,3)$ distributions where $a$ was varied between $[3,20]$ randomly.
We constructed $m=250$ sample sets for training, 25 for validation, and 50 for testing. Each sample set contained $n=500$ $Beta(a,3)$ distributed i.i.d.\ points. Our task in this experiment was to learn the skewness of $Beta(a,b)$ distributions, $
 f=\frac{2(b-a)\sqrt{a+b+1}}{(a+b+2)\sqrt{ab}}$. We considered the noiseless case, i.e. $\mu$ was set to zero. Our estimator of course is not aware of that the sample sets are coming from beta distributions, and it does not know the skewness function values in the test sets either; its values are available only in the training and validation sets.

To find appropriate bandwidths  $b$ and $h$, we sampled 100 i.i.d. randomly and uniformly distributed values in [0,1], evaluated the MSE performance of the distribution regression estimator on the validation test using these bandwidths parameters, and then chose that bandwidth parameters the lead to the best values on the validation test. To estimate the $L_2$ distances between $\hat p_i$ and $\hat p$, we calculated their estimated values in 4096 points on a uniformly distributed grid between the min an max values in the sample sets, and then estimated the integral $\int (p(x)-\hat p_i(x))^2 d(x)$ with the rectangle method numerical integration.
Figure~\ref{fig:beta-skewness} displays the predicted values for the 50 test sample sets, and we also show the true values of the skewness functions. As we can see the true and the estimated values are very close to each other.

In the next experiment, our task was to learn the entropy of Gaussian distributions. We chose a $2\times2$ covariance matrix $\Sigma=AA^T$, where $A\in\mathbb{R}^{2\times 2}$, and $A_{ij}$ was randomly selected from $U[0,1]$. Just as in the previous experiments we constructed 325 sample sets from $\{\mathcal{N}(0,R(\alpha_i)\Sigma^{1/2})\}_{i=1}^{325}$. Where $R(\alpha_i)$ is a 2d rotation matrix with rotation angle $\alpha_i=i\pi/325$. From each $\mathcal{N}(0,R(\alpha_i)\Sigma^{1/2})$ distribution we sampled 500 2-dimensional i.i.d.\ points. Similarly to the previous experiment, 250 points was used for training, 25 for selecting appropriate bandwidth parameters, and 50 for training. Our goal was to learn the entropy of the first marginal distribution: $f=\frac{1}{2}\ln(2\pi e\sigma^2)$, where $\sigma^2=M_{1,1}$ and $M=R(\alpha_i)\Sigma R^T(\alpha_i)\in\mathbb{R}^{2 \times 2}$. $\mu$ was zero in this experiment as well. Figure~\ref{fig:gauss_entropy} displays the learned entropies of the 50 test sample sets. The true and the estimated values are close to each other in this experiment as well.
\begin{figure}  [h] 
  \centering
 \hbox{\subfigure[Skewness of Beta]{
\includegraphics[width = 1.6in]{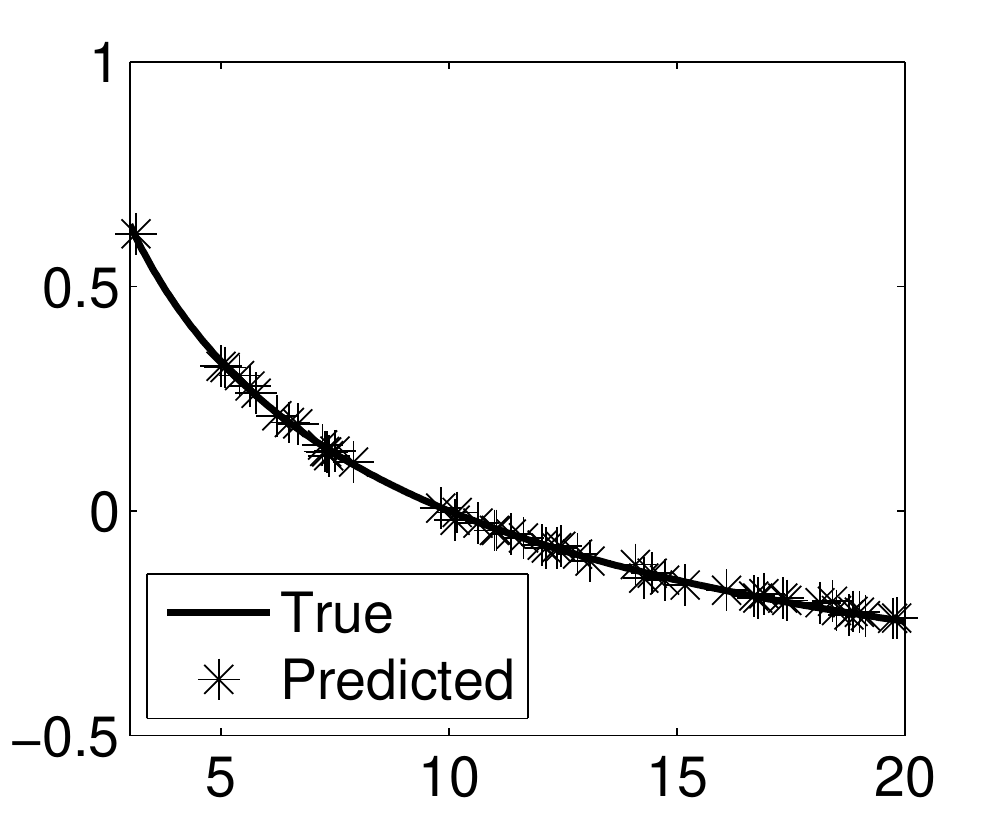} \label{fig:beta-skewness}
}
\hspace*{-.5cm}
  \subfigure[Entropy of Gaussian]{
\includegraphics[width = 1.6in]{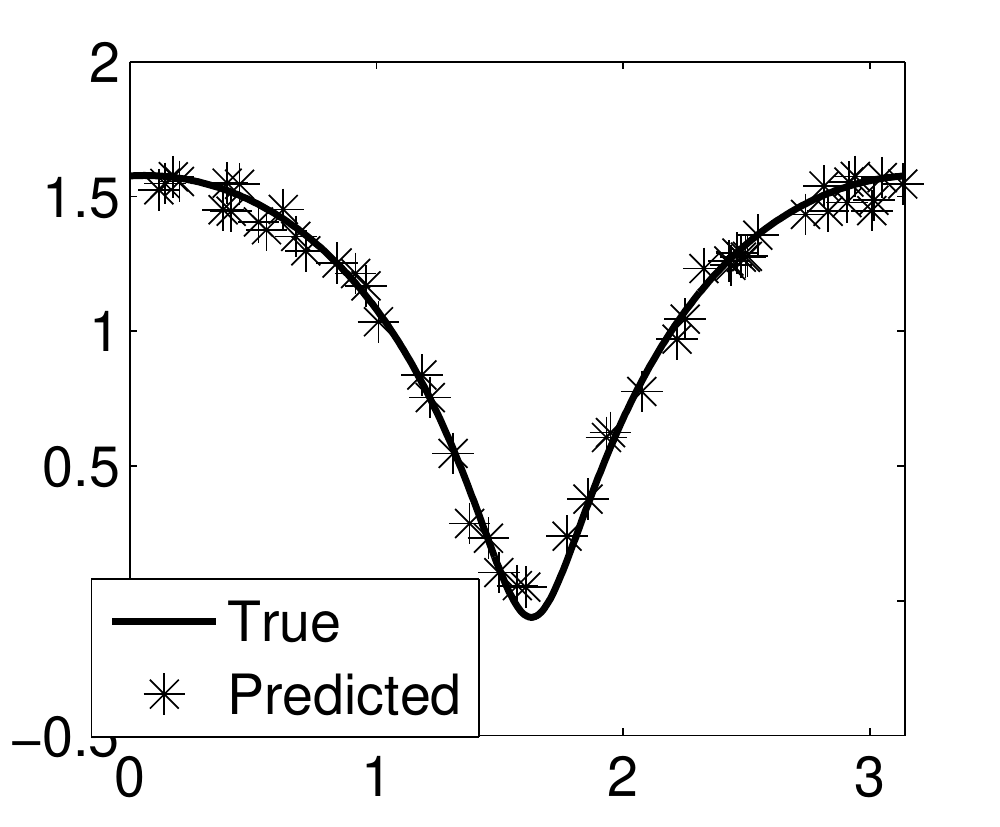} \label{fig:gauss_entropy}}}
 \vspace*{-1mm}
\caption{(a) Learned skewness of $Beta(a,3)$ distribution. Axis $x$: parameter $a$ in $[3,20]$. Axis $y$: skewness of $Beta(a,3)$. (b) Learned entropy of a 1d marginal distribution of a rotated 2d Gaussian distribution. Axes $x$: rotation angle in $[0,\pi]$. Axis $y$: entropy.}
\end{figure}

\section{Discussion and Conclusion}
\label{section::discussion}

We have presented an estimator for
distribution regression
which is distribution-free in the sense that
the estimator makes no strong distributional assumptions
on the error variables.
We derived upper bounds on the risk of the estimator
and, in particular, we analyzed the case with a finite
doubling dimension.

We note that our rates are faster than the logarithmic rates
that are sometimes
obtained in measurement error nonparametric regression models as
in \cite{fan1993nonparametric}.
The reason is that the logarithmic rates
occur when the measurement error is Gaussian.
Our measurement error
corresponds to $||\hat p_i - p_i||$ 
which is not Gaussian for finite $n_i$ and which decreases
when $n_i$ increases.
 In the standard measurement error model,
the error is $O(1)$ and is not decreasing.

In future work,
we will prove lower bounds
which show that, without further assumptions
(such as assumptions about the doubling dimension),
the rates can be very slow.
Also, we will show that
similar results hold for other estimators
such as $k$-nn estimators
and RKHS estimators.

 \bibliography{biblio}

\begin{thebibliography}{18}
\providecommand{\natexlab}[1]{#1}
\providecommand{\url}[1]{\texttt{#1}}
\expandafter\ifx\csname urlstyle\endcsname\relax
  \providecommand{\doi}[1]{doi: #1}\else
  \providecommand{\doi}{doi: \begingroup \urlstyle{rm}\Url}\fi

\bibitem[Carroll et~al.(2006)Carroll, Ruppert, Stefanski, and
  Crainiceanu]{carroll2006measurement}
R.J. Carroll, D.~Ruppert, L.A. Stefanski, and C.M. Crainiceanu.
\newblock \emph{Measurement error in nonlinear models: a modern perspective},
  volume 105.
\newblock Chapman \& Hall/CRC, 2006.

\bibitem[Christmann and Steinwart(2010)]{christmann:universal}
A.~Christmann and I.~Steinwart.
\newblock Universal kernels on non-standard input spaces.
\newblock In \emph{NIPS}, pages 406--414, 2010.

\bibitem[Devroye and Lugosi(2001)]{devroye2001combinatorial}
L.~Devroye and G.~Lugosi.
\newblock \emph{Combinatorial methods in density estimation}.
\newblock Springer, 2001.

\bibitem[Fan and Truong(1993)]{fan1993nonparametric}
J.~Fan and Y.K. Truong.
\newblock Nonparametric regression with errors in variables.
\newblock \emph{The Annals of Statistics}, pages 1900--1925, 1993.

\bibitem[Ferraty and Vieu(2006)]{Ferraty2006}
F.~Ferraty and P.~Vieu.
\newblock \emph{Nonparametric Functional Data Analysis: Theory and Practice}.
\newblock Springer Verlag, 2006.

\bibitem[Gy\"{o}rfi et~al.(2002)Gy\"{o}rfi, Kohler, Krzyzak, and
  Walk]{gyorfi2002}
L.~Gy\"{o}rfi, M.~Kohler, A.~Krzyzak, and H.~Walk.
\newblock \emph{A Distribution-Free Theory of Nonparametric Regression}.
\newblock Springer, New-york, 2002.

\bibitem[Jaakkola and Haussler(1998)]{Jaakkola98exploitinggenerative}
T.~Jaakkola and D.~Haussler.
\newblock Exploiting generative models in discriminative classifiers.
\newblock In \emph{NIPS}, pages 487--493. MIT Press, 1998.

\bibitem[Jebara et~al.(2004)Jebara, Kondor, Howard, Bennett, and
  Cesa-bianchi]{Jebara04probabilityproduct}
T.~Jebara, R.~Kondor, A.~Howard, K.~Bennett, and N.~Cesa-bianchi.
\newblock Probability product kernels.
\newblock \emph{JMLR}, 5:\penalty0 819--844, 2004.

\bibitem[Kondor and Jebara(2003)]{Kondor03akernel}
R.~Kondor and T.~Jebara.
\newblock A kernel between sets of vectors.
\newblock In \emph{ICML}, 2003.

\bibitem[Kpotufe(2011)]{kpotufe2011k}
S.~Kpotufe.
\newblock k-nn regression adapts to local intrinsic dimension.
\newblock \emph{arXiv preprint arXiv:1110.4300}, 2011.

\bibitem[Moreno et~al.(2004)Moreno, Ho, and
  Vasconcelos]{Moreno04akullback-leibler}
P.~Moreno, P.~Ho, and N.~Vasconcelos.
\newblock A {K}ullback-{L}eibler divergence based kernel for {SVM}
  classification in multimedia applications.
\newblock In \emph{NIPS}, 2004.

\bibitem[Muandet et~al.(2012)Muandet, Sch{\"o}lkopf, Fukumizu, and
  Dinuzzo]{muandet:measure-machines}
K.~Muandet, B.~Sch{\"o}lkopf, K.~Fukumizu, and F.~Dinuzzo.
\newblock Learning from distributions via support measure machines.
\newblock \emph{arXiv.org}, stat.ML, February 2012.

\bibitem[P\'oczos et~al.(2011)P\'oczos, Xiong, and Schneider]{poczos11uai}
B.~P\'oczos, L.~Xiong, and J.~Schneider.
\newblock Nonparametric divergence estimation with applications to machine
  learning on distributions.
\newblock In \emph{UAI}, 2011.

\bibitem[P{\'o}czos et~al.(2012)P{\'o}czos, Xiong, Sutherland, and
  Schneider]{sdm-cvpr}
B.~P{\'o}czos, L.~Xiong, D.~Sutherland, and J.~Schneider.
\newblock Nonparametric kernel estimators for image classification.
\newblock In \emph{Computer Vision and Pattern Recognition}, 2012.

\bibitem[Ramsay and Silverman(2005)]{Ramsay05FDA}
J.O. Ramsay and B.W Silverman.
\newblock \emph{Functional data analysis}.
\newblock Springer, New York, 2nd edition, 2005.

\bibitem[Rigollet and Vert(2009)]{rigollet2009optimal}
P.~Rigollet and R.~Vert.
\newblock Optimal rates for plug-in estimators of density level sets.
\newblock \emph{Bernoulli}, 15\penalty0 (4):\penalty0 1154--1178, 2009.

\bibitem[Smola et~al.(2007)Smola, Gretton, Song, and
  Sch\"olkopf]{Smola07ahilbert}
A.~Smola, A.~Gretton, L.~Song, and B.~Sch\"olkopf.
\newblock A {H}ilbert space embedding for distributions.
\newblock In \emph{ALT}, 2007.

\bibitem[Tsybakov(2010)]{tsybakov.2010}
A.B. Tsybakov.
\newblock \emph{Introduction to Nonparametric Estimation}.
\newblock Springer, 2010.

\end{thebibliography}
\bibliographystyle{plainnat}


\newpage
\onecolumn
\newpage
\twocolumn
\section*{Supplementary material}

\subsection*{Proof of Lemma~\ref{lem:smallball}}
 \begin{proof}
The proof follows the argument of \cite{gyorfi2002}.
\begin{align*}
\mathbb{P}\Bigl(\sum_{i=1}^m K_i<\underbar{K}\Bigr)
&=\mathbb{P}\left(\sum_{i=1}^m K\left(\frac{D(P_i,P)}{h}\right)<\underbar{K}\right)\\
&\leq\mathbb{P}\left(\sum_{i=1}^m I_{\{D(P_i,P)\geq rh\}}=0\right),
\end{align*}
since according to our assumptions on kernel $K$ if for some $i$ it holds that $D(P_i,P)/h \leq r$, then $K_i\geq $ \underbar{K}.
Therefore,
\begin{align}
\mathbb{P}&\Bigl(\sum_{i=1}^m K_i<\underbar{K}\Bigr) \nonumber
\leq\mathbb{P}\left(\sum_{i=1}^m I_{\{D(P_i,P)\geq rh\}}=0\right) \nonumber\\
&=\E[\mathbb{P}(\sum_{i=1}^m I_{\{D(P_i,P)\geq rh\}}=0\,| P )] \nonumber\\
&= \int \mathbb{P}\left(\sum_{i=1}^m I_{\{D(P_i,P)\geq rh\}}=0\, \bigg| P \right)d\mathcal{P}(P) \nonumber\\
&=\int[1-\mathcal{P}(P_1\in \mathcal{B}(P,rh)|P)]^m   d\mathcal{P}(P) \label{eq:_tmp_P_i_iid}\\
&\leq\int \exp[-m\mathcal{P}(P_1\in \mathcal{B}(P,rh)|P)]    d\mathcal{P}(P) \label{eq:_tmp_1muexp}\\
&=\int \exp[-m\mathcal{P}(P_1\in \mathcal{B}(P,rh)|P)] \nonumber\\
&\qquad \times \frac{m\mathcal{P}(P_1\in \mathcal{B}(P,rh)|P)}{m\mathcal{P}(P_1\in \mathcal{B}(P,rh)|P)}    d\mathcal{P}(P) \nonumber\\
&\leq  \max_{u>0} u \exp(-u)\int\frac{d\mathcal{P}(P)}{m\mathcal{P}(P_1\in \mathcal{B}(P,rh)|P)} \label{eq:maxuexp} \\
&\leq \frac{1}{e}\int\frac{d\mathcal{P}(P)}{m\mathcal{P}(P_1\in \mathcal{B}(P,rh)|P)}
=\frac{1}{e m}\mathbb{E}\left[\frac{1}{\Phi_{P}(rh)}\right], \nonumber
\end{align}
where we used in \eqref{eq:_tmp_P_i_iid}, \eqref{eq:_tmp_1muexp}, and \eqref{eq:maxuexp} respectively that $\{P_i\}$ are iid, $(1-u)^m\leq \exp(-um)$ for all $0\leq u\leq 1$, $m\geq 1$, and $\max(u\exp(-u))=\frac{1}{e}$.
\end{proof}

\subsection*{Proof of Lemma~\ref{lemma:sum.over.K1}}

\begin{proof}
\begin{align*}
\E&\left[\frac1{\sum_i K_i} I_{\{\sum_i K_i \geq\underline{K}\}}\right]
\leq \E\left[\frac{1+1/\underline{K}}{1+\sum_i K_i}\right] \\
& \leq  \E\left[\frac{1+1/\underline{K}}{1+\underline{K}\sum_i I_{\{D(P_i,P)\leq hR\}}}\right] \\
& = \frac{1+1/\underline{K}}{\underline{K}}\ \E\left[\frac{1}{1/\underline{K}+ \sum_i I_{\{D(P_i,P)\leq hR\}}}\right] \\
& \leq \frac{1+1/\underline{K}}{\underline{K}}\ \E  \left[\frac{1}{1+ \sum_i I_{\{D(P_i,P)\leq hR\}}}  \right] \\
& = \frac{1+1/\underline{K}}{\underline{K}}\ \E \bigg[ \left[\frac{1}{1+ \sum_i I_{\{D(P_i,P)\leq hR\}}} \big| P \right] \bigg]\\
& \leq \frac{1+1/\underline{K}}{m\underline{K}}\E \bigg[ \frac{1}{\Phi_P(Rh)} \bigg],
\end{align*}
where the second-to-last line uses the fact that $\underline{K}<1$ and the last line follows
since for a binomial random variable $B(m,p)$, $\E[\frac1{1+B(m,p)}] \leq \frac1{(m+1)p} \leq \frac1{mp}$.
\end{proof}

\subsection*{Proof of Lemma~\ref{lemma:bounding_epsilon}}

  \begin{proof}
 $D(P,Q)$ is a distance, therefore the triangle inequality holds, and we have that
\begin{align*}
|\epsilon_i|=|K_i -\hat K_i|&=\left|K(\frac{D(P,P_i)}{h})-K(\frac{D( \hat P, \hat P_i)}{h})\right|\\
&\leq \frac{L_K}{h}|{D(P,P_i)}-{D( \hat P, \hat P_i)}|\\
&\leq \frac{L_K}{h}({D(P,\hat P)}+{D( P_i, \hat P_i)}).
\end{align*}
Here we used that
\begin{align*}
D&(P,P_i)-D( \hat P, \hat P_i)\\
&\leq [D(P,\hat P)+D(\hat P,\hat P_i)+D(\hat P_i, P_i)]-D( \hat P, \hat P_i)\\
&=D(P,\hat P)+D(\hat P_i, P_i),
\end{align*}
and
\begin{align*}
D&( \hat P, \hat P_i)-D(P,P_i)\\
&\leq [D(\hat P,P)+D(P,P_i)+D( P_i, \hat P_i)]-D( P, P_i)\\
&=D(\hat P,P)+D(P_i, \hat P_i).
\end{align*}
\end{proof}

\subsection*{Proof of Lemma~\ref{lem:prob.sum.epsilon}}

\begin{proof}
From Markov's inequality, for any $X$, $Y$ and constant $\kappa>0$,
\begin{align*}
1 \leq \frac{\E[|X|\, |Y]}{\kappa}+\mathbb{P}(|X|< \kappa|Y).
\end{align*}
Thus,
\begin{align}
\mathbb{P}&\Bigl(\sum_i |\epsilon_i|<\kappa\Bigm| \{P_i\}_{i=1}^m, P\Bigr)\\\
&\geq 1-\frac{\E[\sum_i |\epsilon_i| | \{P_i\}_{i=1}^m,P]}{\kappa} \nonumber\\
&=1-\frac{\sum_i \E[ |\epsilon_i| | P_i,P]}{\kappa} \nonumber\\
&\geq 1-\frac{L_K}{h\kappa}{\sum_{i=1}^m \E[ ({D(P,\hat P)}+{D( P_i, \hat P_i)})| P_i,P]} \label{eq:Peikappa_1}\\
&\geq 1-\frac{L_K}{h\kappa}{m2\bar{C}n^{-\frac{1}{2+k}}} =\eta(\kappa,n,m) . \nonumber
\end{align}
Here \eqref{eq:Peikappa_1} holds
due to Lemma~\ref{lemma:bounding_epsilon}, and we also used \eqref{eq:exp_D_hatP_P}.
\end{proof}

\subsection*{Proof of Lemma~\ref{lemma:sum.epsilon.j}}

\begin{proof}
The term $\E\left[ \sum_{i=1}^m |\epsilon_i| \bigg| P,\{P_i\}_{i=1}^m \right]$ is upper bounded by
\begin{align*}
  & \frac{L_K}{h}\sum_{i=1}^m \E\left[D(P,\hat P)+D( P_i, \hat P_i) \bigg| P,\{P_i\}_{i=1}^m \right]\\
& \leq \frac{L_K}{ h}  2\bar{C}m n^{-\frac{1}{2+k}}.
\end{align*}
\end{proof}

\subsection*{Proof of Lemma~\ref{lem:zeta}}

\begin{proof}
Recall that
$D(\hat P_i,P_i) \leq rh/4$ for all $i$
on an event $\Omega_{m,n}$ and that
$\mathbb{P}(\Omega_{m,n}^c) \leq (m+1) e^{- \frac{1}{2}n^{\frac{k}{2+k}} }$.
So, on $\Omega_{m,n}$,
\begin{align*}
D(\hat P_i,\hat P) &\leq
D(\hat P_i,P_i) + D(P,\hat P) + D(P_i,P)\\
& \leq D(P_i,P) + \frac{rh}{2}.
\end{align*}
Now, using the event $\Omega_{m,n}$ defined in Lemma \ref{lem:Omega},
\begin{align*}
\mathbb{P}&\Bigl(\sum_{i=1}^m \hat K_i=0\Bigr)\leq \mathbb{P}\Bigl(\sum_{i=1}^m \hat K_i<\underbar{K}\Bigr)\\
 &=
\mathbb{P}\Bigl(\Omega_{m,n},\sum_{i=1}^m \hat K_i<\underbar{K}\Bigr)+ \mathbb{P}\Bigl(\Omega_{m,n}^c,\sum_{i=1}^m \hat K_i <\underbar{K}\Bigr)\\
& \leq
\mathbb{P}\Bigl(\Omega_{m,n},\sum_{i=1}^m \hat K_i<\underbar{K}\Bigr) +
\mathbb{P}\Bigl(\Omega_{m,n}^c\Bigr)\\
& \leq
\mathbb{P}\Bigl(\Omega_{m,n},\sum_{i=1}^m  \hat K_i<\underbar{K}\Bigr) +
(m+1) e^{- \frac{1}{2}n^{\frac{k}{2+k}} }
\end{align*}
and
\begin{align*}
\mathbb{P}\Bigl(\Omega_{m,n},\sum_{i=1}^m  \hat K_i<\underbar{K}\Bigr) &=
\mathbb{P}(\Omega_{m,n},\sum_{i=1}^m I_{D(\hat P_i,\hat P) \geq rh} =0)\\
&\leq
\mathbb{P}(\sum_{i=1}^m I_{D(P_i,P) \geq rh/2} =0)\\
& \leq
\frac{1}{em}\mathbb{E}\left[ \frac{1}{\Phi_P(rh/2)}\right].
\end{align*}
The result follows.
\end{proof}

\subsection*{Proof of Lemma~\ref{lem:DeltaE2E2}}

\begin{proof}
\begin{align*}
\E&\left[\Delta \hat f I_{ E_2} I_{ \hat E_2}  \right]=\E\left[\left|
\frac{\sum_i Y_i \hat K_i}{\sum_j \hat K_j}
-\frac{\sum_i Y_i K_i}{\sum_j K_j}
\right|I_{ E_2} I_{ \hat E_2}\right]\\
&=\E\left[\left|\sum_i Y_i\left(\frac{  K_i}{\sum_j K_j}-\frac{ \hat K_i}{\sum_j \hat K_j}\right)
\right|I_{ E_2} I_{ \hat E_2} \right]\\
&\leq B_Y\E\left[\sum_i \left|\left(\frac{  K_i}{\sum_j K_j}-\frac{ \hat K_i}{\sum_j \hat K_j}\right)
\right|I_{ E_2} I_{ \hat E_2} \right]\\
&= B_Y\E\left[\sum_i \left(\frac{|  K_i (\sum_j \hat K_j)- \hat K_i (\sum_j K_j)|}{(\sum_j \hat K_j) (\sum_j K_j)}\right)
I_{ E_2} I_{ \hat E_2} \right]\\
&= B_Y\E\bigg[\sum_i \left(\frac{|  (\hat K_i - \epsilon_i) (\sum_j \hat K_j)-  \hat K_i (\sum_j ( \hat K_j - \epsilon_j) )|}{(\sum_j \hat K_j) (\sum_j K_j)}\right)\\
&\qquad \qquad \times I_{ E_2} I_{ \hat E_2} \bigg]\\
&=B_Y\E\bigg[\sum_i \left(\frac{|  - \epsilon_i (\sum_j \hat K_j) +  \hat K_i (\sum_j \epsilon_j)|}{(\sum_j \hat K_j) (\sum_j K_j)}\right)
I_{ E_2} I_{ \hat E_2} \bigg]\\
&\leq B_Y\E\bigg[ \bigg(\frac{ (\sum_i \hat K_i) (\sum_j |\epsilon_j|)}{(\sum_j \hat K_j) (\sum_j K_j)} \\
&\quad + \frac{ (\sum_i |\epsilon_i|) (\sum_j \hat K_j)}{(\sum_j \hat K_j) (\sum_j K_j)}\bigg)
I_{ E_2} I_{ \hat E_2} \bigg]\\
&=B_Y\E\bigg[ \bigg(\frac{  \sum_j |\epsilon_j|}{\sum_j   K_j}+ \frac{ (\sum_i |\epsilon_i|)}{\sum_j   K_j}\bigg)
I_{ E_2} I_{ \hat E_2} \bigg]\\
&\leq 2B_Y\E\bigg[ \frac{  \sum_j |\epsilon_j|}{\sum_j  K_j} I_{  E_2} \bigg]\\
&= \E\bigg[ \E \bigg[  \sum_j |\epsilon_j| |P,\{ P_i \}_{i=1}^m \bigg] \frac{1}{\sum_j  K_j} I_{  E_2} \bigg]\\
&  \leq  2B_Y \frac{L_K}{ h}  2\hat{c}m n^{-\frac{1}{2+k}}\E \bigg[  \frac{1}{\sum_j  K_j} I_{  E_2} \bigg]\\
&\leq 2B_Y  \frac{L_K}{ h}  2\hat{c}m n^{-\frac{1}{2+k}} \frac{1+1/\underline{K}}{m\underline{K}} \E \left[\frac{1}{\Phi_P(Rh)}\right]\\
&=C_1 \frac{1}{h} \E \bigg[\frac{1}{\Phi_P(Rh)} \bigg]  n^{-\frac{1}{2+k}}.
\end{align*}
where we used Lemma \ref{lemma:sum.epsilon.j} and Lemma \ref{lemma:sum.over.K1}.
\end{proof}

\subsection*{Proof of Theorem~\ref{thm:firstterm}}

 \begin{proof}
\begin{align*}
\E|\hat f&(\hat P; \hat P_1,\ldots, \hat P_m )-\hat f( P; P_1,\ldots, P_m )|\\
&\leq  \E\left[\Delta \hat f I_{ E_2} I_{ \hat E_2}\right]+ 3B_Y\zeta +
3\frac{B_Y}{e m}   \E \bigg[\frac{1}{\Phi_P(rh)} \bigg]\\
& \leq C_1 \frac{1}{h} \E \bigg[\frac{1}{\Phi_P(Rh)} \bigg]  n^{-\frac{1}{2+k}}
 +C_2 \frac{1}{m}\mathbb{E}\left[\frac{1}{ \Phi_{P}(rh)}\right]\\
&\qquad+ C_3 \frac{1}{m}\mathbb{E}\left[\frac{1}{ \Phi_{P}(rh/2)}\right]+(m+1) e^{- \frac{1}{2}n^{\frac{k}{2+k}}}.
\end{align*}
Note also that $\Phi_{P}(rh/2) \leq \Phi_{P}(rh)\leq \Phi_P(Rh)$.
\end{proof}

\subsection*{Proof of Lemma~\ref{lem:part_in_2ndterm}}
 \begin{proof}
Notice that
if $\sum_i K_i \geq \underline{K} > 0$,
$$
\text{var}\left(\frac{\sum_i \mu_i  K_i}{\sum_i K_i}|P,P_1,\dots, P_m\right)
\leq B_Y^2 \frac{\sum_i K_i^2}{(\sum_i K_i)^2}.
$$

Using this and H\"{o}lder's inequality, we get:
\begin{align*}
& \E\left[\left|\frac{\sum_i \mu_i  K_i}{\sum_i K_i}\right|I_{\{\sum_i K_i \geq \underline{K}\}}\right]\\
& = \E\left[\E\left[\left|\frac{\sum_i \mu_i  K_i}{\sum_i K_i}\right|I_{\sum_i K_i \geq \underline{K}}|P,P_1,\dots, P_m\right]\right]\\
& \leq \E\left[\sqrt{\text{var}\left(\frac{\sum_i \mu_i  K_i}{\sum_i K_i}|P,P_1,\dots, P_m\right)}I_{\{\sum_i K_i \geq \underline{K}\}}\right]\\
& \leq \E\left[B_Y\frac{\sqrt{\sum_i K_i^2}}{\sum_i K_i}I_{\{\sum_i K_i \geq \underline{K}\}}\right]\\
&\leq \E\left[B_Y\frac{\sqrt{\sum_i K_i}}{\sum_i K_i}I_{\{\sum_i K_i \geq \underline{K}\}}\right] \\
& \leq B_Y \sqrt{\E\left[\frac{\sum_i K_i}{(\sum_i K_i)^2}I_{\{\sum_i K_i \geq \underline{K}\}}\right]}\\
& \leq B_Y\sqrt{\E\left[\frac{I_{\{\sum_i K_i \geq \underline{K}\}}}{\sum_i K_i}\right]} \\
& \leq B_Y\sqrt{\frac{1+1/\underline{K}}{m\underline{K}}\ \int\frac{d\mathcal{P}(P)}{\Phi_P(Rh) }}.
\end{align*}
The second inequality holds since $K(x) < 1$ and
the last step stems from Lemma \ref{lemma:sum.over.K1}.
\end{proof}

\end{document}